\newcommand{\yrcite}[1]{\citeyearpar{#1}}
\newcommand{\bs}{\mathbf{s}}
\newcommand{\ba}{\mathbf{a}}
\newcommand{\bx}{\mathbf{x}}
\newcommand{\by}{\mathbf{y}}
\DeclareMathOperator{\softmax}{softmax}
\theoremstyle{plain}
\newtheorem{theorem}{Theorem}[section]
\newtheorem{proposition}[theorem]{Proposition}
\theoremstyle{definition}
\theoremstyle{remark}
\title{Self-Guided Process Reward Optimization with Redefined Step-wise Advantage for Process Reinforcement Learning}
\author{
Fei Wu \\
HKUST(GZ)
\and
Shuxian Liang \\
Tongji Univ.
\and
Yibo Yang \\
SJTU
\and
Yang Lin \\
PKU
\AND
\normalfont
Jing Tang \\
HKUST(GZ)
\and
Lei Chen \\
HKUST(GZ)
\and
Xiansheng Hua \\
Tongji Univ.
\and
Hao Kong\thanks{Corresponding author. Please contact us at: \texttt{fwu750@connect.hkust-gz.edu.cn} or \texttt{konghao@pku.edu.cn}.} \\
Tencent
}
\begin{document}

\maketitle
\begin{abstract}

Process Reinforcement Learning~(PRL) has demonstrated considerable potential in enhancing the reasoning capabilities of Large Language Models~(LLMs). 
However, introducing additional process reward models incurs substantial computational overhead, and there is no unified theoretical framework for process-level advantage estimation.
To bridge this gap, we propose \textbf{S}elf-Guided \textbf{P}rocess \textbf{R}eward \textbf{O}ptimization~(\textbf{SPRO}), a novel framework that enables process-aware RL through two key innovations: (1) we show that process rewards can be derived intrinsically from the policy model itself, and (2) we redefine step-wise advantage by introducing well-defined Cumulative Process Rewards~(\textbf{CPR}) and \textbf{M}asked \textbf{S}tep \textbf{A}dvantage~(\textbf{MSA}), which facilitates rigorous step-wise action advantage estimation within shared-prompt sampling groups.
Our experimental results show that SPRO outperforms vanilla GRPO with 3.4× higher training efficiency and a 12.9\% test accuracy improvement. 
Furthermore, SPRO maintains a stable and elevated policy entropy throughout training while achieving a considerable reduction in response length, evidencing sufficient exploration and prevention of reward hacking.
Notably, SPRO incurs no additional computational overhead compared to outcome-supervised RL methods such as GRPO, which benefit industrial implementation.

\end{abstract}

\section{Introduction}
\label{sec:intro}

Reasoning ability is fundamental to the intelligence of language models and plays a pivotal role in advancing artificial general intelligence~(AGI).
Recent open source reasoning models, such as OpenAI's o1~\cite{openai2024o1} and DeepSeek's R1~\cite{guo2025deepseek}, demonstrate the effectiveness of reinforcement learning in reasoning tasks. 
Most current RL algorithms~\cite{guo2025deepseek, shao2024deepseekmath, team2025kimi, ahmadian2024rloo} optimize policy models based exclusively on outcome rewards, neglecting intermediate feedback. This sparse reward paradigm leads to inefficient learning~\cite{qu2025survey, cui2025entropy}, highlighting the need for efficient and scalable process-based reinforcement learning algorithms.
This raises a fundamental research question: \emph{how to provide well-defined process rewards and guide the policy optimization effectively?}

Although significant research~\cite{lightman2023let,feng2023alphazero,snell2024scaling,wang2024math} has been devoted to addressing this question, there is no unified theoretical framework for process-level advantage estimation. 
Training auxiliary process reward models~(PRMs), which estimate the future success of intermediate steps, is a commonly adopted strategy. 
However, PRMs exhibit several widely recognized practical limitations:

\textbf{(1) Difficult to train:} 
    Human-annotated process-level labels lack scalability~\citep{lightman2023let}, while automatic annotation often fails to provide reliable supervision~\citep{guo2025deepseek}.
\label{challenge:1}

\textbf{(2) High computational cost:} 
~    In contrast to the widely adopted dual-model framework~(policy and reference model) in outcome-supervised algorithms such as~\citet{guo2025deepseek},~\citet{ shao2024deepseekmath} and~\citet{ahmadian2024rloo}, which significantly contributes to scalability and industrial adoption, PRM-based methods introduce an auxiliary reward model. 
    Loading of the additional model requires considerable GPU memory allocation, which constrains the batch size and substantially degrades training throughput and efficiency.
\label{challenge:2}

\textbf{(3) Non-scalable utilization:} 
    Existing methods typically leverage PRMs to rerank candidate responses~\cite{uesato2022solving} or perform Monte Carlo Tree Search~(MCTS)~\cite{lightman2023let, feng2023alphazero, wang2024math}, aiming to improve reasoning trajectories. 
    However, constructing the reasoning search space requires sequential rollouts at each step~\cite{snell2024scaling}, making these approaches non-scalable in online RL.

Recent works have proposed novel methods for acquiring high-quality PRMs.~\citet{rafailovrfrom} demonstrate that a well-trained DPO model can inherently achieve credit assignment, effectively expressing token-level rewards in the Markov Decision Process~(MDP) framework of LLMs.
Extending DPO framework,~\citet{yuan2024free} introduce a more generalized implicit PRM training paradigm that replaces preference pairs with point-wise labeled trajectories, which can be trained using 
cross-entropy loss. Subsequently,~\citet{cui2025process} improve the implicit PRM methodology by proposing the PRIME framework, which effectively combines token-level rewards with outcome rewards to calculate trajectory advantages.
The 
derived token-level reward functions proposed in~\citet{rafailovrfrom},~\citet{yuan2024free}, and~\citet{cui2025process} eliminate the need for explicit process annotations, thus addressing the practical limitation (1) and streamlining the training pipeline.

However, PRIME~\cite{cui2025process} relies on an auxiliary reward model $\pi_{\varphi}$ to parameterize implicit PRM, which inherits the practical limitation (2) and requires iterative training updates throughout the optimization process. This approach not only consumes additional GPU memory but also introduces non-negligible computational overhead. Additionally, although PRIME~\citep{cui2025process} circumvents limitation (3) by estimating advantages via a Monte Carlo estimator with a leave-one-out baseline, it aggregates all process rewards into a single group for normalization (see~\cref{fig:Compare-frameworks}). 
It deviates from standard advantage-based policy gradient methods like PPO~\cite{schulman2017proximal}, thereby introducing significant estimation bias.

%
To address the computational inefficiency of auxiliary PRMs in industry-scale PRL while 
achieving a more reasonable process advantage estimation, in this paper, we propose Self-guided Process Reward Optimization~(SPRO), a PRM-free algorithm for process reinforcement learning as shown in~\cref{fig:Illustration_of_SPRO}. We demonstrate that process rewards can be self-guided directly from the policy model itself, as SPRO eliminates both the annotation requirements and computational overhead inherent to PRM-based approaches and preserves the simplicity and scalability of outcome-supervised RL algorithms~\cite{shao2024deepseekmath,ahmadian2024rloo}, which benefit the industrial implementation. 

Moreover, SPRO offers a theoretical framework for step-level advantage estimation by redefining the step-wise advantage through Cumulative Process Reward~(CPR). 
%
This approach aligns with the classic advantage-based policy gradient framework by leveraging the nature of masked attention, which 
encodes prefix-sequence information. Specifically, CPR implicitly aggregates the process rewards from all preceding steps in the prefix sequence as a surrogate for process rewards, enabling more accurate expected return estimation at each timestep.
For advantage estimation, we extend the formulation of group-relative advantage from outcome-supervised algorithms~\cite{shao2024deepseekmath,ahmadian2024rloo} and investigate the estimation of step-level advantage.
To enable fair comparisons, we introduce Masked Step Advantage~(MSA), which enforces strict per-step comparison within shared-prompt sampling groups.

\begin{figure*}[t]
    \centering
    \begin{subfigure}[b]{0.49\textwidth} 
        \centering
        \includegraphics[width=\textwidth]{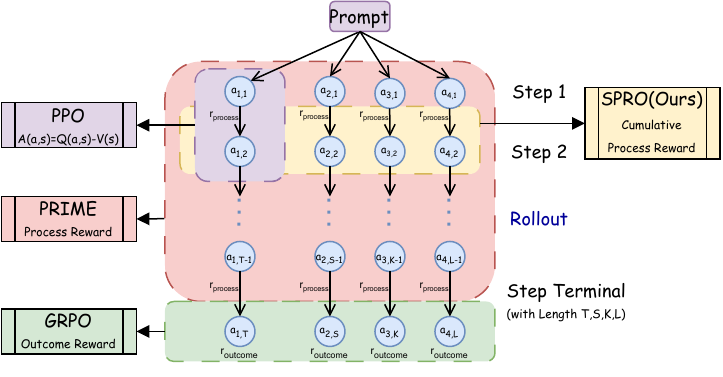}
        \caption{\textbf{The directions of advantage calculation across different framework.}}
        \label{fig:Compare-frameworks}
    \end{subfigure}
    \begin{subfigure}[b]{0.49\textwidth} 
        \centering
        \includegraphics[width=\textwidth]{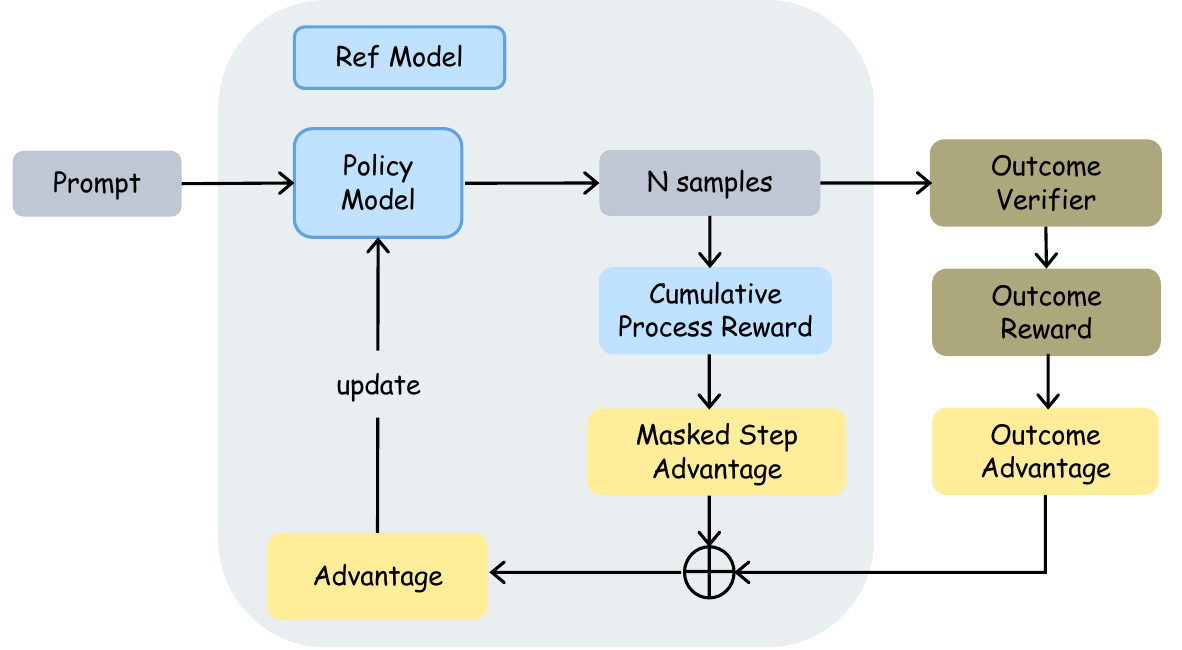}
        \caption{\textbf{Illustration of SPRO.}}
        \label{fig:Illustration_of_SPRO}
    \end{subfigure}
    \caption{(a): PPO estimates advantages based on single-step state transitions. GRPO treats all terminal states as one-step transitions from the initial prompt state. PRIME aggregates all process rewards into a single group for normalization. In contrast, we propose SPRO, which groups rewards at the same step for calculation. (b): SPRO utilizes Cumulative Process Rewards directly derived from the policy model, thereby eliminating the need for an additional process reward model and establishing a dual-model framework comparable to outcome-supervised approaches.}
    \label{fig:intro_second}
    \vspace{-1em}
\end{figure*}

As shown in~\cref{fig:Compare-frameworks}, we compare advantage functions across mainstream methods. Assume that four responses are sampled, where each response receives a outcome reward, and each intermediate step is assigned a process reward. 
GRPO~\cite{guo2025deepseek} estimates policy gradients by computing relative advantages within \emph{trajectory groups} using outcome rewards, while 
PRIME~\cite{cui2025process} adapts the grouping paradigm to all the token-level rewards for normalization.
As for SPRO, we employ the Cumulative Process Reward~(CPR) to compute the step-wise rewards at step $t$.  
Subsequently, for each identical step across different trajectories, group-wise normalization is applied to the rewards to obtain the Masked Step Advantage~(MSA).

The experimental results demonstrate significant improvements of SPRO over baseline methods. SPRO achieves $12.9\%$ higher test accuracy than vanilla GRPO and $7.2\%$ higher than PRIME, while reducing computational costs to $29\%$~(vs. GRPO) and $15\%$~(vs. PRIME) of GPU hours for equivalent performance.
The comparisons on response length and policy entropy also demonstrate that our approach simultaneously addresses two long-standing challenges that have attracted significant community attention: (1) improving token efficiency in reasoning~\cite{qu2025survey,liu2025understanding}, and (2) mitigating policy entropy collapse~(or reward hacking) during RL training~\cite{cui2025entropy}. 
These phenomena are discussed in~\cref{sec:analysis}. The dual improvement indicates that our framework enables the policy model to more effectively recognize 
the advantages of each step, resulting in both 
efficient reasoning and effective action space exploration.

The main contributions are summarized as follows: 
\begin{itemize}
    \item We introduce a novel RL framework for LLMs, \emph{Self-Guided Process Reward Optimization~(SPRO)}, which eliminates the need for costly PRMs and retains the same simplicity and scalability as outcome-supervised RL.
    \item We redefine the step-level advantage by introducing \emph{Cumulative Process Reward~(CPR)} as a surrogate for self-guided process rewards and further propose \emph{Masked Step Advantage~(MSA)}, which enables a strict per-step comparison within shared-prompt sampling groups to estimate step-level advantages.
    \item Our experimental results demonstrate that SPRO simultaneously improves accuracy and training efficiency while resolving two critical challenges: token efficiency and policy entropy collapse. SPRO significantly reduces the length of reasoning sequences while achieving higher accuracy. Moreover, SPRO maintains higher policy entropy, promoting more efficient exploration and mitigating reward hacking.
\end{itemize}

\section{Self-Guided Process Reward Optimization}
\label{sec:spro}

In this section, we propose a novel PRM-free process reinforcement learning framework for token-level Markov Decision Processes (MDPs) in LLMs. 
Our framework
enables the policy model to serve dual roles during optimization: (i) as an \textbf{Actor} module for policy improvement through reinforcement learning, and (ii) as a \textbf{Reward} module for token-level credit assignment during the generation process. 
Therefore, we refer to our framework as Self-Guided Process Reward Optimization.

We introduce our framework through three steps:
\begin{enumerate}
    \item \cref{sec:3.1}: We propose the hypothesis that any LLM can provide credit assignment for token-level MDP, which is independent of specific training objectives used to train LLMs. 
    \item \cref{sec:3.2}: We redefine step-wise advantage by introducing Cumulative Process Reward (CPR) for token-level MDP and
    propose Masked Step Advantage for process RL training.
    \item \cref{sec:3.3}: We introduce the Self-Guided Process Reward Optimization (SPRO) algorithm, detailing its objective function and training procedure.
\end{enumerate}

This self-guided framework has three key advantages: (i) it avoids the reward modeling bottleneck inherent in traditional RLHF pipelines; (ii) credit assignment dynamically improves in alignment with policy improvement, creating a virtuous cycle of mutual refinement; and (iii) it enables effective process reinforcement learning.

\subsection{The process rewards could be self-guided}
\label{sec:3.1}

As shown in~\cref{eq:new_r_function}, the process reward $r(\mathbf{s}_t,\mathbf{a}_t)$ is defined as the log-ratio between the probability of $\ba_t$ under the optimal policy and the given reference policy. \citet{rafailovrfrom} argue that the trained DPO model $\pi^*$ yields the best estimate of an optimal $Q$-function, since the value term $V^*(\mathbf{s})$ is reduced using Bradley-Terry preference model. 
Building upon this,~\citet{yuan2024free} further extend the idea to Cross-Entropy (CE) loss.~\citet{cui2025process} apply CE loss to train an implicit PRM, and subsequently use the resulting process rewards to compute advantages for policy optimization. 

Obviously, the accuracy of $r(\mathbf{s}_t,\mathbf{a}_t)$ directly depends on the quality of the optimal policy. 
Since the policy model is trained to directly approximate $\pi^*$, the near-optimal solution $\pi_{\theta_T}$ inherently provides more accurate rewards than a PRM $\pi_{\varphi}$ trained separately. 
Otherwise, the policy model itself would be inferior, contradicting its optimality hypothesis, which means that the separately trained PRM could be a better solution than our trained policy model $\pi_\theta$.
This observation forms the foundation for our self-guided reward formulation.

\begin{proposition}\label{proposition:1}
    Any LLM is always the optimal soft $Q$-functions for some reward functions in the token-level MDP~\cite{rafailovrfrom}, thus enabling token-level credit assignment. 
    In particular, this equivalence allows the policy model to serve as its own reward model.
\end{proposition}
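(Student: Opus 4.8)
The plan is to establish the two assertions of Proposition~\ref{proposition:1} separately. For the existence part I will lean on the bijection between reward functions and optimal soft $Q$-functions recalled around Eq.~(\ref{eq:q_function}); for the monotonicity part I will quantify credit-assignment error via the reward--policy identity Eq.~(\ref{eq:new_r_function}) and relate it to a policy divergence from the task-optimal policy.

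For the first assertion I would argue by explicit construction. Given an arbitrary LLM $\pi_\theta$ (regarded as a conditional distribution over next tokens), set $Q_\theta(\bs_t,\ba_t) := \beta\log\pi_\theta(\ba_t|\bs_t)$ and define the associated value $V_\theta(\bs_t) := \beta\log\sum_{\ba\in\mathcal{A}} e^{Q_\theta(\bs_t,\ba)/\beta}$. Because $\sum_{\ba}\pi_\theta(\ba|\bs_t)=1$, a one-line computation gives $V_\theta\equiv 0$, hence $Q_\theta(\bs_t,\ba_t)-V_\theta(\bs_t)=\beta\log\pi_\theta(\ba_t|\bs_t)$, so $\pi_\theta$ is precisely the soft-max policy of $Q_\theta$ in the sense of Eq.~(\ref{eq:policy_function}) and Eq.~(\ref{eq:log-linearize_policy}). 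Now define a reward function $r_\theta$ through the soft Bellman equation Eq.~(\ref{eq:q_function}): on non-terminal transitions $r_\theta(\bs_t,\ba_t):=Q_\theta(\bs_t,\ba_t)-\beta\log\pi_{\mathrm{ref}}(\ba_t|\bs_t)-V_\theta(\bs_{t+1})$, and on terminal ones the same without the $V_\theta(\bs_{t+1})$ term. By construction the triple $(Q_\theta,V_\theta,r_\theta)$ satisfies the soft Bellman optimality recursion; since the token-level MDP has deterministic transitions and a finite horizon, that recursion has a unique solution, so $Q_\theta$ is the optimal soft $Q$-function for $r_\theta$ --- which is exactly the argument of Rafailov et al.~\cite{rafailovrfrom}. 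Reading Eq.~(\ref{eq:new_r_function}) with $\pi^*$ replaced by $\pi_\theta$ then exhibits the induced token-level reward $r_\theta(\bs_t,\ba_t)=\beta\log\frac{\pi_\theta(\ba_t|\bs_t)}{\pi_{\mathrm{ref}}(\ba_t|\bs_t)}-V_\theta(\bs_{t+1})+V_\theta(\bs_t)$, and nothing in this derivation used how $\pi_\theta$ was trained.

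For the second assertion I would fix the task's ground-truth reward $r$ with optimal soft policy $\pi^*$ and value $V^*$, and measure the credit-assignment error of $\pi_\theta$ by the gap between its induced reward $r_\theta$ and $r$ on on-policy state--action pairs. Subtracting the two instances of Eq.~(\ref{eq:new_r_function}) gives, per token, $r(\bs_t,\ba_t)-r_\theta(\bs_t,\ba_t)=\beta\log\frac{\pi^*(\ba_t|\bs_t)}{\pi_\theta(\ba_t|\bs_t)}-\big(V^*(\bs_{t+1})-V_\theta(\bs_{t+1})\big)+\big(V^*(\bs_t)-V_\theta(\bs_t)\big)$; summing over a trajectory the value-mismatch terms telescope down to a single prompt-level constant (a potential-shaping term that leaves optimal policies and advantages invariant), so the intrinsic error is driven by $\beta\sum_t\log\frac{\pi^*(\ba_t|\bs_t)}{\pi_\theta(\ba_t|\bs_t)}$, whose expectation under $\pi^*$ is $\beta\sum_t\mathbb{E}_{\bs_t}\big[\mathrm{KL}\big(\pi^*(\cdot|\bs_t)\,\|\,\pi_\theta(\cdot|\bs_t)\big)\big]$. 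Thus the credit-assignment error is controlled by a policy divergence that vanishes exactly when $\pi_\theta=\pi^*$; invoking the optimality hypothesis discussed before Proposition~\ref{proposition:1} --- that an LLM with higher downstream accuracy is closer to $\pi^*$ in this divergence --- yields the monotonicity claim, and in particular the near-optimal $\pi_{\theta_T}$ cannot be outperformed in reward accuracy by a separately trained implicit PRM $\pi_\varphi$ without contradicting $\pi_{\theta_T}$'s near-optimality. The main obstacle is precisely this second part: ``more accurate'' only acquires meaning once one commits to (i) a metric for reward error, e.g. expected $\ell_1$ distance over on-policy samples, (ii) a quantitative Lipschitz-type bound of that error in $\max_{\bs}\mathrm{KL}\big(\pi^*(\cdot|\bs)\,\|\,\pi_\theta(\cdot|\bs)\big)$ --- the genuine estimate, requiring the telescoping above together with careful handling of the terminal/non-terminal split of Eq.~(\ref{eq:q_function}) and of the fact that $V^*$ and $V_\theta$ are themselves log-partition functions of their $Q$'s --- and (iii) a mild, empirically grounded assumption that downstream accuracy is non-decreasing as that divergence shrinks; steps (i) and (iii) are definitional or assumptional, and only (ii) carries real work.
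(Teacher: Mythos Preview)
Your argument for the first assertion is essentially the paper's, with one cosmetic difference: the paper sets $Q(\bs_t,\ba_t)=\beta\,\bm{\ell}(\ba_t\mid\bs_t)$ using the raw logits $\bm{\ell}$, so that $V(\bs_t)=\beta\log Z(\bs_t)$ is the log-partition, and then observes $\pi=\softmax(\bm{\ell})=e^{(Q-V)/\beta}$ before appealing to the bijection of~\cite{rafailovrfrom}. Your choice $Q_\theta=\beta\log\pi_\theta$ differs from this by the state-dependent shift $\beta\log Z$, which is exactly why you obtain $V_\theta\equiv 0$; either construction yields the same softmax policy and the same induced reward up to potential shaping, so the two are interchangeable. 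Your explicit definition of $r_\theta$ via the soft Bellman equation just spells out what the paper delegates to the citation.

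For the second assertion you go well beyond the paper. The paper's treatment of that part is a single sentence: downstream performance indicates closeness to the optimal policy, and combined with Eqs.~(\ref{eq:policy_function}) and~(\ref{eq:new_r_function}) the quality of the credit assignment is ``observable.'' No error metric, no telescoping, no KL control. Your decomposition --- subtracting two instances of Eq.~(\ref{eq:new_r_function}), telescoping the value mismatch to a prompt-level constant (which works cleanly because your $V_\theta\equiv 0$ and $V^*$ vanishes at terminal states), and identifying the residual trajectory error with $\beta\sum_t\mathrm{KL}(\pi^*(\cdot\mid\bs_t)\,\|\,\pi_\theta(\cdot\mid\bs_t))$ under the $\pi^*$-visitation distribution --- is a genuine sharpening. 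It makes precise what the paper leaves heuristic and isolates exactly which commitments (your (i)--(iii)) are needed to turn the slogan into a quantitative statement. What your route buys is rigor and an explicit monotone link between policy divergence and reward error; what the paper's route buys is brevity, since the claim is really functioning there as a motivating observation rather than a theorem.
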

\begin{proof}
	Let $\bm{\ell}(\ba_t|\bs_t)$ denote the output logits of a given LLM policy $\pi$ for token $\ba_t$ conditioned on state $\bs_t$. 
	We define $Q$-function as a scaled version of the logits: $Q(\bs_t, \ba_t) = \beta \bm{\ell}(\ba_t|\bs_t)$.
	The corresponding partition function is derived by taking the log-sum-exp of logits over all possible actions $\ba \in \mathcal{A}$ and defined as $Z(\bs_t)$. Consequently, the optimal value function corresponding to such $Q$ is exactly $V(\bs_t) = \beta \log Z(\bs_t)$. This yields the following form of the policy: 
	\begin{eqnarray}\label{eq:llm_qfunction}
		\pi(\ba_t|\bs_t) = \softmax(\mathbf{\bm{\ell}(\ba_t|\bs_t)}) = \frac{e^{\mathbf{\bm{\ell}(\ba_t|\bs_t)}}}{\sum_{\mathbf{a} \in \mathcal{A}} e^{\mathbf{\bm{\ell}(\ba |\bs_t)}}} = \frac{e^{Q(\bs_t, \ba_t) / \beta}}{Z(\bs_t)} = e^{\left(Q(\mathbf{s}_t,\mathbf{a}_t) - V(\mathbf{s}_t)\right)/\beta}.
	\end{eqnarray}	
        Eq.~(\ref{eq:llm_qfunction}) shows that any LLM is a soft $Q$-function for some reward function~\cite{rafailovrfrom}. 
         This property is independent of the specific objective used to train the LLM. Hence, the policy model can provide token-level credit assignment for itself.
\end{proof}

This style of relationship is first explored by~\citet{rafailovrfrom} and later in~\citet{zhong2024dpo} and \citet{yuan2024free}. However, these works limit the scope of credit assignment to models trained with specific objectives. In contrast, we generalize this finding to any LLM.

Building on this generalization, our framework leverages
the policy model itself, rather than a separately trained reward model. Since both the policy and reference models are initialized from the same SFT model, the process rewards are initially zero. As training progresses and the policy model shifts away from the reference, the process rewards start contributing to the RL optimization.

\begin{figure}[t]
    \centering
    \includegraphics[width=0.6\linewidth]{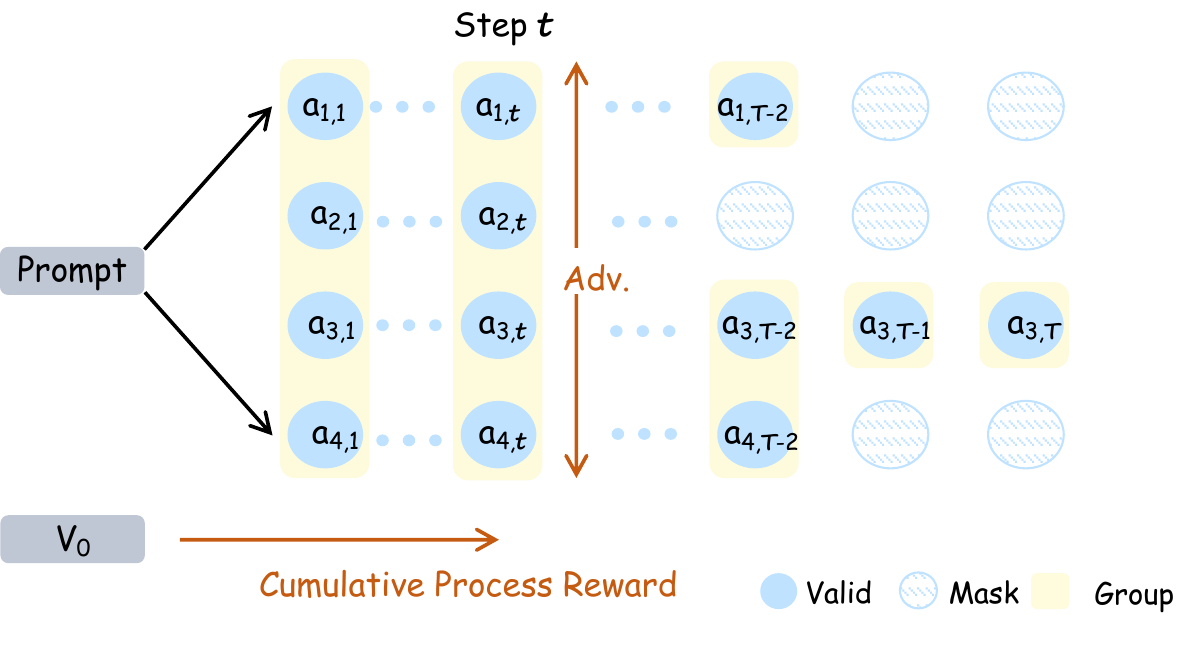}
    \caption{\textbf{Illustration of Masked Step Advantage.} Assume that four responses are sampled for each prompt. At each step $t$, we calculate  cumulative process rewards and further compute the step-level advantages within the vertical valid masked groups, excluding empty step units from all calculations.}
    \label{fig:Masked-Step-Adv}
\end{figure}

\subsection{Redefine Step-wise Advantage}
\label{sec:3.2}
In the previous section, we propose that the process reward can be self-guided by the policy model. In this section, we redefine step-wise advantage by introducing \emph{Cumulative Process Reward (CPR)} and \emph{Masked Step Advantage (MSA)} to facilitate rigorous step-wise action advantage estimation.

\textbf{Cumulative Process Reward (CPR).}\hspace{1em}
We argue that the mechanism of LLMs should inform the design of process rewards in token-level MDPs. 
Since auto-regressive generation employs masked attention, the hidden state at step $t$ inherently encodes all information of the prefix sequence~\cite{vaswani2017attention}, which means each hidden state represents the complete trajectory up to its corresponding time step.
Prior work has effectively utilized this property:~\citet{lightman2023let}  employ the final token's hidden state at each step to predict correctness with PRMs, aligning with extensive probing studies that leverage such representations to analyze model properties~\cite{belinkov2022probing,aspillaga2021inspecting,conneau2018you,dai2022knowledge,geva2021transformer,allen2024physics}. 
Therefore, we propose that intermediate reward signals at step $t$ should similarly capture contributions from all preceding steps, which we formalize as the definition of the Cumulative Process Reward.

Given a policy model $\pi_\theta$ during training iterations~(we omit the iteration subscript for convenience), Proposition~\ref{proposition:1} establishes that there always exists an implicit reward function $r(\bs_t,\ba_t)$ together with corresponding $Q(\bs_t, \ba_t)$ and $V(\bs_t)$ functions. 
Due to the optimality of the $Q$-function introduced in Appendix~\ref{sec:preliminaries}, these functions also satisfy the equality relation in~\cref{eq:new_r_function} as follows:
\begin{eqnarray}\label{eq:iteration_reward_func}
    r(\mathbf{s}_t,\mathbf{a}_t) + V(\mathbf{s}_{t + 1}) - V(\mathbf{s}_t) = \beta\log\frac{\pi_\theta(\mathbf{a}_t|\mathbf{s}_t)}{\pi_{\text{ref}}(\mathbf{a}_t|\mathbf{s}_t)}.
\end{eqnarray}

For an arbitrary time step $t$ within a trajectory $\tau=\{ \bs_0, \ba_0, \ldots, \ba_{T-1}, \bs_{T} \}$, we define a cumulative reward w.r.t. step $t$ by accumulating~\cref{eq:iteration_reward_func} from $0$ to $t$:
\begin{eqnarray}\label{eq:sum_process}
    \sum_{j=0}^{t}\left(r\left(\mathbf{s}_j,\mathbf{a}_j\right) + V\left(\mathbf{s}_{j + 1}\right) - V\left(\mathbf{s}_j\right)\right) = \sum_{j=0}^{t}\beta\log\frac{\pi_\theta(\mathbf{a}_j|\mathbf{s}_j)}{\pi_{\text{ref}}(\mathbf{a}_j|\mathbf{s}_j)}.
\end{eqnarray}

By telescoping the value function $V(\mathbf{s}_{j+1}) - V(\mathbf{s}_j)$ on the left-hand side, we can get the following:
\begin{eqnarray}\label{eq:cumulative_rt_Vt+1}
\sum_{j=0}^{t} r(\mathbf{s}_j, \mathbf{a}_j) + V(\mathbf{s}_{t+1})  = V(\mathbf{s}_0) + \sum_{j=0}^{t} \beta \log \frac{\pi(\mathbf{a}_j | \mathbf{s}_j)}{\pi_{\text{ref}}(\mathbf{a}_j | \mathbf{s}_j)}.
\end{eqnarray}

\citet{nachum2017bridging} exploit similar derivations to enforce temporal consistency by minimizing the squared soft consistency error, a framework~\citet{guo2022efficient} subsequently adopt for text generation. Alternatively,~\citet{xieexploratory} leverage these derivations to analyze trajectory coverability and derive online sampling complexity bounds. Unlike prior works, however, we treat this equation as a surrogate reward signal to facilitate advantage computation for policy optimization.

The left-hand side of~\cref{eq:cumulative_rt_Vt+1} represents cumulative reward up to step $t$ plus future expected return starting from $\mathbf{s}_{t+1}$ (discount factor $\gamma = 1$). Let $\mathcal{R}_t$ denote Cumulative Process Reward (CPR):
\begin{eqnarray}\label{eq:Rt_defination}
    \mathcal{R}_t := \sum_{j=0}^{t} r(\mathbf{s}_j, \mathbf{a}_j) + V(\mathbf{s}_{t+1}) = V(\mathbf{s}_0) + \sum_{j=0}^{t} \beta \log \frac{\pi(\mathbf{a}_j | \mathbf{s}_j)}{\pi_{\text{ref}}(\mathbf{a}_j | \mathbf{s}_j)}.
\end{eqnarray}
We adopt CPR to align with the cumulative nature of LLM representations. Additionally, since all responses start from the same initial state $\mathbf{s}_0$, $\mathcal{R}_{t}$ facilitates subsequent advantage computation.

\textbf{Masked Step Advantage (MSA).}\hspace{1em}
For trajectories $\{\tau_i\}$ of the same prompt, the Cumulative Process Rewards $\{\mathcal{R}_{i,t}\}$ at the same step $t$ are comparable because they all start from the same initial state $\mathbf{s}_0$ and can be regarded as one-step state transition rewards, similar to GRPO that the outcome reward can be seemed as one-step transition rewards.

By introducing $\tilde{\mathcal{R}}_{t}=\sum_{j=0}^{t} \beta \log \frac{\pi_\theta(\mathbf{a}_j | \mathbf{s}_j)}{\pi_{\text{ref}}(\mathbf{a}_j | \mathbf{s}_j)}$, we formally define \textbf{Masked Step Advantage (MSA)} corresponding to the cumulative reward as follows:
\begin{eqnarray}\label{eq:MSA}
\mathrm{MSA}_{i,t} := \mathcal{R}_{i,t} - b_t = \tilde{\mathcal{R}}_{i,t} - \tilde{b_t} = \tilde{\mathcal{R}}_{i,t} - \mathrm{mask\_mean}(\{\tilde{\mathcal{R}}_{i,t}\}),
\end{eqnarray}
where $i$ represents the $i^{\text{th}}$ response.
The term $b_t$ (or $\tilde{b}_t$) denotes the 
group-wise average of valid steps at step $t$, serving as an advantage baseline. Here, a $\mathrm{masked\_mean}$ operator is employed.
Note that the constant $V(\mathbf{s}_0)$ on the right-hand side of \cref{eq:Rt_defination} will cancel out in all responses, making $\mathcal{R}_{i,t}$ and $\tilde{\mathcal{R}}_{i,t}$ equivalent for advantage calculation. 

Taking~\cref{fig:Masked-Step-Adv} as an example, if only the third response contains a valid step at time $T{-}1$, then $b_{T-1} = \mathcal{R}_{3,T-1}$ and $\mathrm{MSA}_{3,T-1} = 0$. This indicates that \textbf{MSA does not introduce the length bias}, since the third response does not gain additional advantage even if it is longer than the others. 
In this way, strict per-step comparisons within shared-prompt sampling groups are achieved without introducing length bias.

\begin{algorithm}[t]
    \caption{Self-Guided Process Reward Optimization (SPRO)}
    \label{alg:spro}
    {\bfseries Input:} Initial policy model $\pi_{\theta_{\text{init}}}$; outcome reward verifier $r_o$; task prompts $\mathcal{D}$.
    \begin{algorithmic}[1] 
        \STATE policy model $\pi_\theta \gets \pi_{\theta_{\text{init}}}$
        \STATE reference model $\pi_{\text{ref}} \gets \pi_{\theta_{\text{init}}}$
        \FOR{iteration $= 1$ {\bfseries to} $K$}
            \STATE Sample a batch $\mathcal{D}_b$ from $\mathcal{D}$
            \STATE Update the old policy model $\pi_{\theta_{\text{old}}} \gets \pi_\theta$
            \STATE Sample $G$ outputs $\{\by_i\}_{i=1}^G \sim \pi_{\theta_{\text{old}}}(\cdot \mid \bx)$ for each $\bx \in \mathcal{D}_b$
            \STATE Compute outcome rewards $\{r_o(\by_i)\}_{i=1}^G$ for each sampled output ${\by}_i$
            \STATE Apply accuracy filter on prompts in $\mathcal{D}_b$
            \STATE Compute the Cumulative Process Reward $\mathcal{R}_{i,t}$ and Masked Step Advantage $\mathrm{MSA}_{i,t}$ for the $t^{\text{th}}$ token of all responses $\{{\by}_i\}$ with \cref{eq:Rt_defination,eq:MSA}
            \STATE Compute $A_{i,t}$ for the $t^{\text{th}}$ token of all responses $\{{\by}_i\}$ through \cref{eq:advantage_function}
            \FOR{iteration $= 1$ {\bfseries to} $\mu$}
                \STATE Update the policy model $\pi_\theta$ by maximizing the SPRO objective \cref{eq:objective_function}
            \ENDFOR
        \ENDFOR 
        
    \end{algorithmic}
    {\bfseries Output:} Policy Model $\pi_\theta$.
\end{algorithm}


\subsection{Self-Guided Process Reward Optimization}
\label{sec:3.3}

Following common practice in Policy Gradient algorithm~\cite{williams1992simple}, we incorporate MSA as a process reward signal into the outcome-supervised RL method Group Relative Policy Optimization (GRPO)~\cite{shao2024deepseekmath}, resulting in the SPRO advantage function:
\begin{eqnarray}\label{eq:advantage_function}
A_{i,t} & = & 
\underbrace{
    \frac{r_{o}\left(\by_{i}\right) - \operatorname{mean}\left(\left\{r_{o}\left(\mathbf{y}_{i}\right)\right\}\right)}
    {\operatorname{std}\left(\left\{r_{o}\left(\mathbf{y}_{i}\right)\right\}\right)}
}_{\text{GRPO with outcome rewards}} + \underbrace{\Big(\mathcal{R}_{i,t} - \operatorname{masked\_mean}\left(\{\mathcal{R}_{i,t}\}\right)\Big)}_{\mathrm{MSA}_{i,t}}.
\end{eqnarray}

Then the policy model can be optimized by maximizing the objective as follows:
\begin{eqnarray}\label{eq:objective_function}
\mathcal{J}_{\text{SPRO}}(\theta) &&= \mathbb{E}_{\bx,\left\{\by_{i}\right\}_{i=1}^{G}\sim\pi_{\theta_{\text{old}}}(\cdot\mid \bx)} \nonumber \frac{1}{G}\sum_{i=1}^{G}\frac{1}{\left|\by_i\right|}  
\sum_{t=1}^{\left|\by_i\right|} 
\min \\
&&\left(
\frac{\pi_{\theta}(y_{i,t}\mid \bx,\by_{i,<t})}{\pi_{\theta_{\text{old}}}(y_{i,t}\mid \bx,\by_{i,<t})}A_{i,t}, 
\operatorname{CLIP}\left(
\frac{\pi_{\theta}(y_{i,t}\mid \bx,\by_{i,<t})}{\pi_{\theta_{\text{old}}}(y_{i,t}\mid \bx,\by_{i,<t})},
1-\varepsilon, 1+\varepsilon
\right)A_{i,t}
\right).
\end{eqnarray}



\cref{alg:spro} illustrates the detailed implementation of our proposed SPRO framework. It can be observed that the calculations of CPR and MSA depend exclusively on the current policy model during training, which motivates our designation of the approach as self-guided. 
Furthermore, our proposed advantage function computes relative advantages by grouping tokens from the identical timestep across all sampled responses, ensuring a less biased advantage estimation.

\section{Experiments}
\label{sec:experiment}
\subsection{Experimental setup}
\vspace{-1pt}
We conducted experiments on mathematical and programming datasets to evaluate our proposed SPRO, focusing on comparing vanilla GRPO~\cite{shao2024deepseekmath} and PRIME~\cite{cui2025process}.
Appendix~\ref{appendix:exp_setup} describes our experimental setup in detail, including \textbf{Base model and Dataset, Resources and Hyper-parameters, Outcome Verifiers, Evaluation Benchmarks and Baseline Algorithms.} 
\vspace{-1pt}

\vspace{-2em}
\subsection{Main Results}
\vspace{-1pt}
Our experiment results demonstrate that SPRO enables effective PRL for reasoning, yielding substantial improvements over GRPO and PRIME.

\textbf{Accuracy Improvement.}\hspace{1em}As shown in~\cref{fig:test_acc}, SPRO achieves a $12.9\%$ higher test accuracy than GRPO~\cite{shao2024deepseekmath}, and $7.2\%$ higher than PRIME~\cite{cui2025process}.  The detailed test accuracy values can be found in Table~\ref{tab:rl_comparison}. The comparative evaluation results in Table~\ref{tab:rl_eval} demonstrates that SPRO significantly enhances the reasoning capabilities of policy models.We further show that this advantage is not limited to the Qwen family: on Llama-3.1-8B-Instruct~\cite{grattafiori2024llama} trained with the same RL dataset, SPRO still outperforms both GRPO and PRIME by 6.7\%, suggesting that our method generalizes across model families (Appendix~\ref{appendix:llama}). Beyond RLVR settings with verifiable rewards, we also observe performance gains in a general-language alignment benchmark, suggesting that the benefit of SPRO extend beyond math and code reasoning (Appendix~\ref{appendix:beyond rlvr}).

\textbf{Token Efficiency.}\hspace{1em} \cref{fig:response_len} shows that SPRO achieves progressively shorter response lengths during training process compared to baseline methods. 
Table~\ref{tab:rl_seq_len} presents the sampling sequence length on the validation set. Our SPRO achieves the best performance among all methods, even with a response length  nearly one-third shorter than vanilla GRPO. 
A case study is presented in Appendix~\ref{appendix:case study}.

\textbf{Training Efficiency.}\hspace{1em} As shown in~\cref{fig:gpu_acc_a}, our SPRO requires only $29\%$ and $15\%$ of the GPU hours needed by vanilla GRPO and PRIME respectively to achieve equivalent accuracy. 
Furthermore, the shorter sampling sequence length generated by SPRO~(as illustrated in~\cref{fig:response_len}) contributes to a significant computational advantage in each optimization step.  This is evidenced by the wall-clock time breakdown in~\cref{tab:wallclock} (Appendix~\ref{appendix:wall clock breakdown}).


\textbf{Entropy Stability.}\hspace{1em} As shown in~\cref{fig:entropy}, unlike PRIME which suffers from entropy collapse, our method maintains effective state-action space exploration during training, preserving optimization efficiency and avoiding reward hacking.

\begin{figure}[t]
    \centering
    \begin{subfigure}[b]{0.4\textwidth} 
        \centering
        \includegraphics[width=\textwidth]{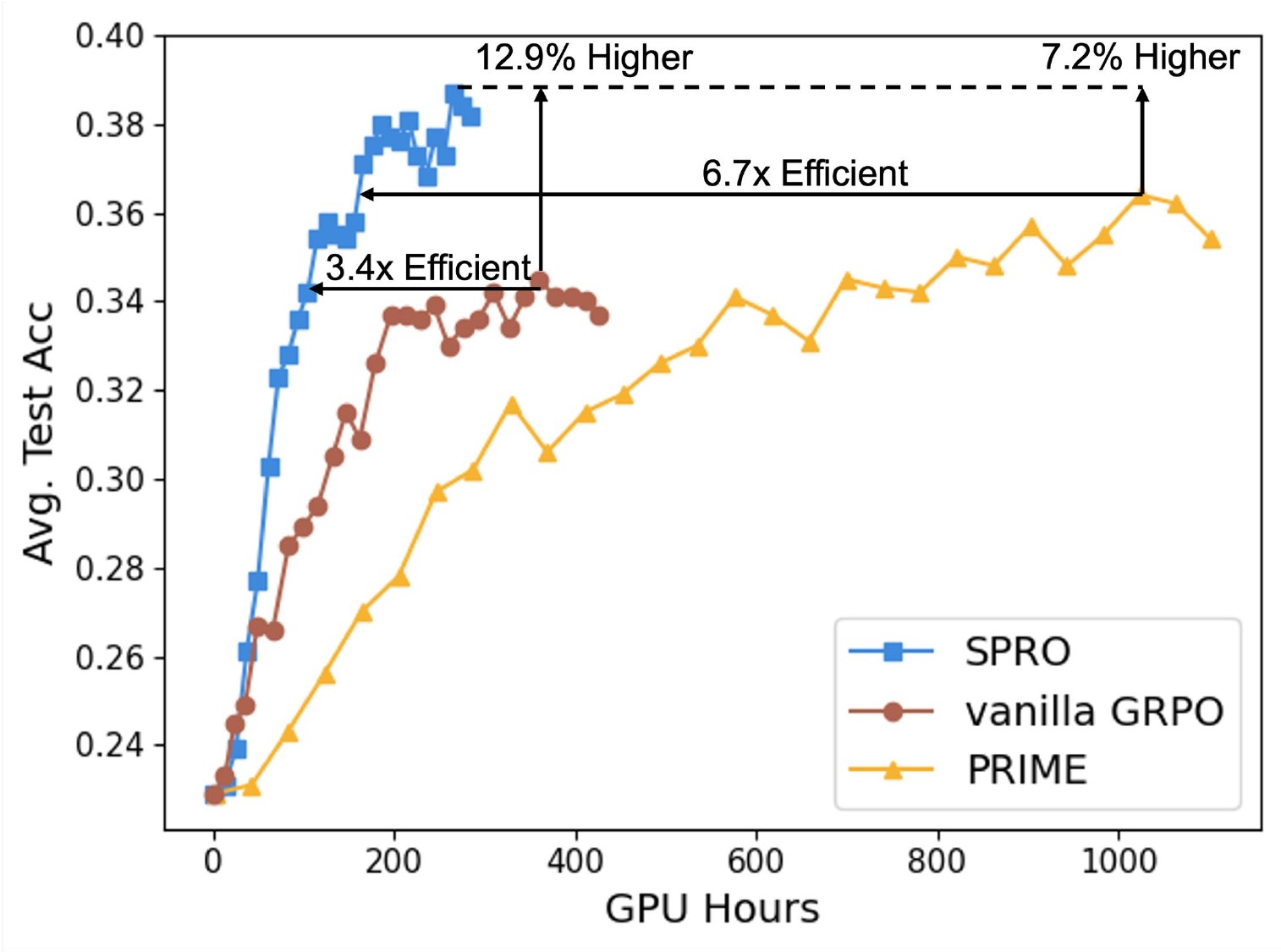}
        \caption{GPU Hours}
        \label{fig:gpu_acc_a}
    \end{subfigure}
    \hspace{0.1\textwidth}
    \begin{subfigure}[b]{0.4\textwidth} 
        \centering
        \includegraphics[width=\textwidth]{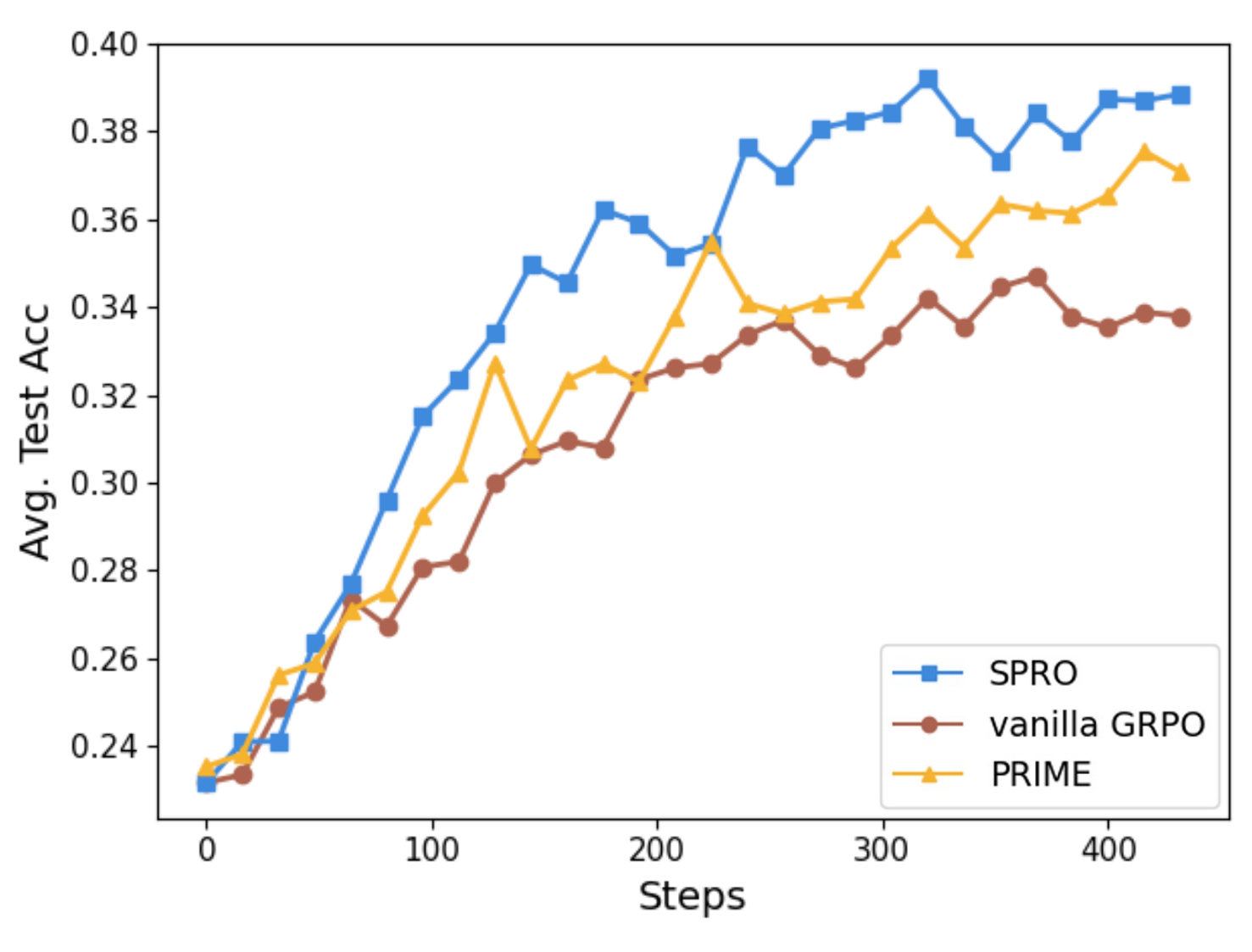}
        \caption{Steps}
        \label{fig:step_acc_b}
    \end{subfigure}
    \caption{Performance of SPRO on math and code benchmarks with respect to GPU hours and training steps. SPRO outperforms the previous SoTA process reinforcement learning method PRIME with 6.7× higher training efficiency and an 7.2\% performance improvement. Notably, SPRO reduces per-step computation time owing to its shorter trajectories.}
    \label{fig:test_acc}
    \vspace{-2em}
\end{figure}


\begin{table*}[t]
  \caption{\textbf{Comparison of evaluation accuracy} between SPRO and other baselines under identical conditions: same base model, training data, and 432 training steps. The baselines are reproduced using the veRL codebase\protect\footnotemark.}
  \label{tab:rl_eval}
  \centering
   \setlength{\tabcolsep}{1.5pt} 
  \begin{tabular*}{0.96\textwidth}{@{\extracolsep{\fill}} l c c c c c c c c}
    \toprule
    Methods   & AIME & AMC  & MATH & Minerva & Olympiad & LeetCode & LiveCode & Avg. \\
    \midrule
    Base Model & 3.33 & 37.35 & 64.00 & 21.69 & 28.15 & 22.22 & 16.72 & 27.64 \\
    vanilla GRPO    & 10.00 & 39.76 & 66.40 & 27.57 & 30.67 & 28.89 & 22.70 & 32.28 \\
    PRIME & 10.00 & 43.37 & 72.20 & 28.68 & 29.04 & 28.33 & 22.16 & 33.40 \\
    \textbf{Ours (SPRO)} & \textbf{13.33} & \textbf{45.78} & \textbf{74.20} & \textbf{30.15} & \textbf{39.11} & \textbf{29.44} & \textbf{24.00} & \textbf{36.57}\\
    \bottomrule
  \end{tabular*}
\end{table*}

\begin{figure}[t]
    \centering
    \begin{subfigure}[b]{0.4\textwidth} 
        \centering
        \includegraphics[width=\textwidth]{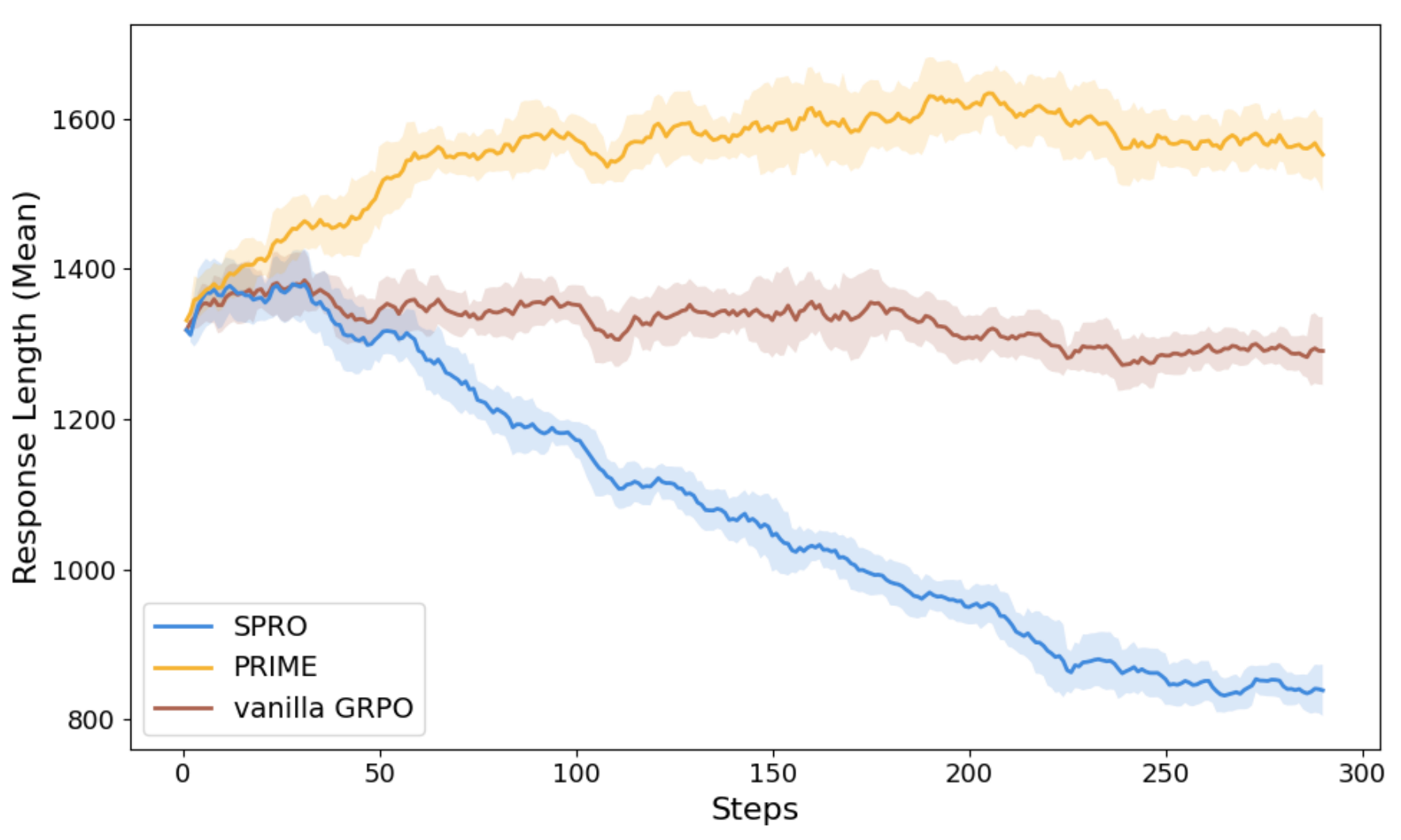}
        \caption{Average length of policy model-generated responses.}
        \label{fig:response_len}
    \end{subfigure}
    \hspace{0.1\textwidth}
    \begin{subfigure}[b]{0.4\textwidth} 
        \centering
        \includegraphics[width=\textwidth]{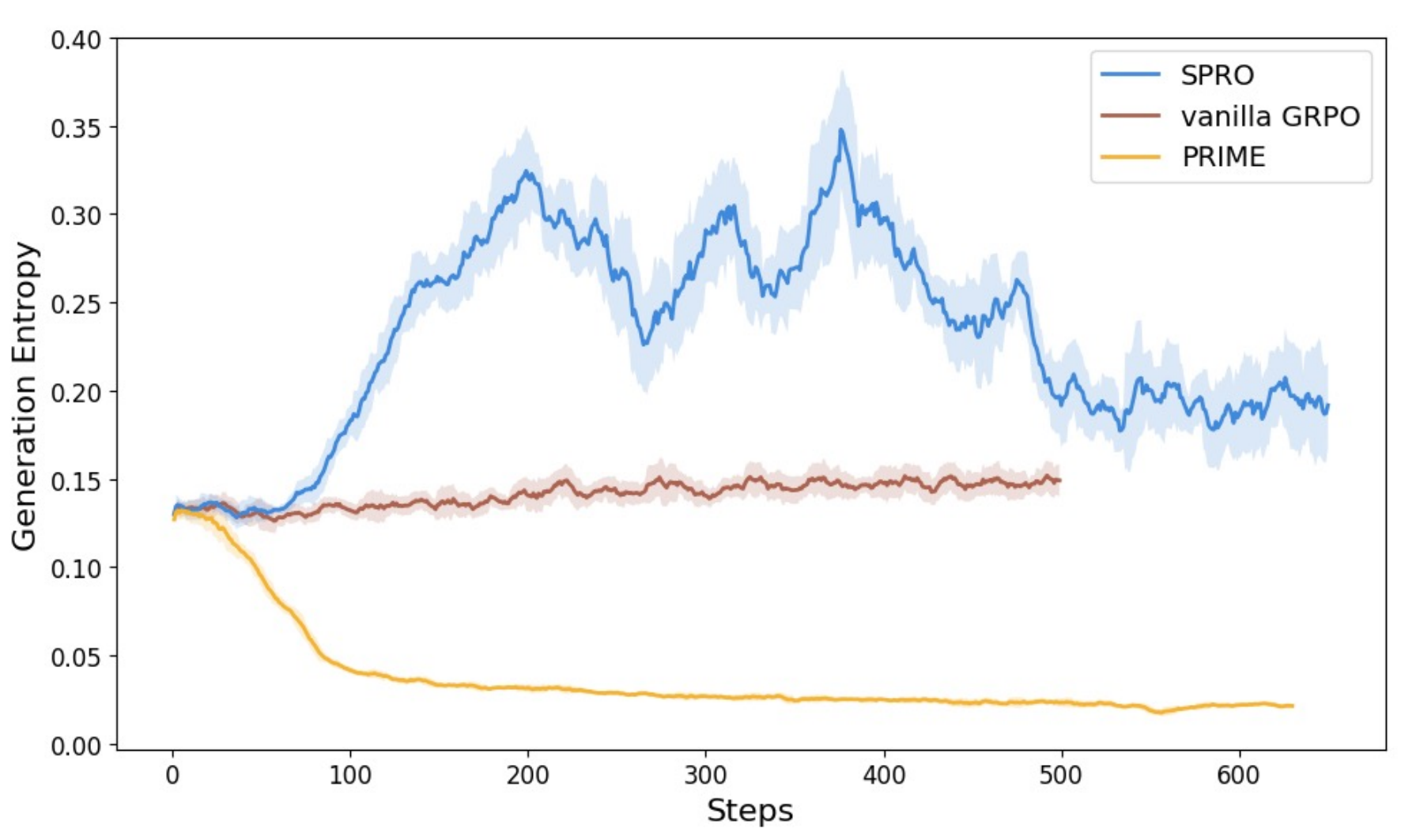}
        \caption{Entropy of policy model’s generation probabilities.}
        \label{fig:entropy}
    \end{subfigure}
    \caption{Effective PRL enables efficient reasoning traces and exploration of action space.}
    \label{fig:response_entropy}
\end{figure}


\subsection{Analysis}
\label{sec:analysis}
\subsubsection{SPRO enables efficient reasoning trajectories}

\cref{fig:response_len} shows that SPRO reduces the average response length of vanilla GRPO~\cite{shao2024deepseekmath} by nearly one-third while improving test accuracy by $12.9\%$, demonstrating the effectiveness of our framework in process reinforcement learning.

\footnotetext{https://github.com/volcengine/verl}


This improvement is primarily attributed to the rigorous step-wise comparison mechanism introduced in~\cref{eq:MSA}. Our approach provides the policy model with MSA feedback at each generation step, 
enabling it to identify which tokens contribute positively to the overall return. Such a fine-grained feedback mechanism effectively encourages more concise and task-focused output. In contrast, existing methods such as PRIME~\cite{cui2025process} employ a coarser reward signal by averaging the returns across all trajectories and timesteps. 
This design results in advantages
that are relative to both groups and timesteps, consequently diminishing the effectiveness of the intended group-level comparison.

The Cumulative Process Rewards defined in~\cref{eq:Rt_defination} not only provide the 
foundation for MSA in~\cref{eq:MSA}, but also exhibit intrinsic alignment with the hidden state dynamics of LLMs. 
This insight, inherent in the mechanism of LLMs  and different from conventional RL scenarios, merits further attention in future research on reinforcement learning for LLMs.

\subsubsection{SPRO enables exploration of action space}

The collapse of policy entropy is a widely observed phenomenon in reinforcement learning, as documented in prior works~\cite{cui2025entropy, yu2025dapo}, and our experimental results are consistent with this trend. 
In our experiments, the entropy coefficient is fixed at $0.001$ across methods. 
This persistent exploration constitutes a key feature of SPRO, enabling longer and more effective RL training. By maintaining output diversity, our SPRO prevents premature convergence to suboptimal behaviors while preserving the potential for further improvement. Cui et al. demonstrate that the policy entropy naturally decreases when high-advantage actions already have high probability, but increases when the model selects rare yet high-advantage actions~\yrcite{cui2025entropy}. 
SPRO encourages the latter behavior, validating the effectiveness of our advantage function design in~\cref{eq:advantage_function}. This exploration mechanism directly contributes to the $12.9\%$ improvement in test accuracy over vanilla GRPO. Importantly, this performance gain stems not from implementation tricks such as policy loss clipping, but from genuine exploration dynamics.

In particular, our SPRO successfully combines active exploration with more concise reasoning trajectories. 
The reduced response length does not mean shortcutting; instead, the policy model thoroughly explores the state-action space while strategically selecting concise, diverse, and effective solutions.
This demonstrates a form of intelligent exploration, where the model identifies efficient solutions without compromising the correctness or diversity.

\subsubsection{SPRO enables industry-scale process reinforcement learning}

\begin{table}[t]
  \caption{\textbf{Scaling experiment results} comparing performance across different training steps. SPRO scales to larger models and continues to provide stable improvements.}
  \label{tab:scaling_results}
  \centering
  
  \begin{tabular*}{\linewidth}{@{\extracolsep{\fill}} l c c c c c}
    \toprule
    Step & 0 & 32 & 64 & 128 & 200 \\
    \midrule
    Qwen2.5-14B-Instruct & 0.382 & 0.422 & 0.442 & 0.439 & 0.458 \\
    Eurus-2-7B-SFT       & 0.227 & 0.239 & 0.277 & 0.336 & 0.358 \\
    \bottomrule
  \end{tabular*}
\end{table}

Even in advanced industrial Large Reasoning Models, PRMs have been identified as a failure case due to their inherent limitations~\cite{guo2025deepseek}. 
Training inefficiency significantly reduces its potential benefits. This is clearly demonstrated in~\cref{fig:test_acc} when the x-axis is changed from training steps to GPU hours. However, SPRO effectively addresses the training inefficiency, keeping the dual-model framework comparable to outcome-supervised methods.

As shown in~\cref{fig:gpu_acc_a}, our method reduces the additional computational overhead. Furthermore,~\cref{tab:scaling_results} demonstrates that SPRO scales effectively to larger models, yielding consistent performance gains. While hardware constraints limited our evaluation to 7B and 14B models, SPRO shows considerable potential for industrial-scale implementation.

\subsection{Ablation Study}
\label{sec:ablation study}
\begin{table}[t]
  \caption{\textbf{Ablation Study for SPRO.} Evaluation accuracy under different component settings.}
  \label{tab:ablation_results}
  \centering
  \begin{tabular*}{\linewidth}{@{\extracolsep{\fill}} l c c c c c}
    \toprule
    Step & 120 & 160 & 400 & 412 & 596 \\
    \midrule
    Base & 0.356 & \textbf{0.114} & -- & -- & -- \\
    + CPR  & 0.338 & 0.358 & 0.384 & \textbf{0.111} & -- \\
    +MSA (SPRO)   & 0.338 & 0.354 & 0.387 & 0.385 & 0.399 \\
    \bottomrule
  \end{tabular*}
\end{table}

We analyze the contributions of CPR and MSA in our methodology, as detailed in~\cref{tab:ablation_results}. The results show that both components are essential for the performance gains and training stability
in SPRO. For the base method, in the absence of CPR and MSA, we adopt $r= \beta \log \frac{\pi_\theta}{\pi_{\text{ref}}}$ as the reward signal following previous work~\cite{rafailov2023direct,rafailovrfrom} and the grouping strategy in PRIME~\cite{cui2025process} . As shown in~\cref{tab:ablation_results}, without the full SPRO framework, the model suffers from sudden performance degradation (e.g., accuracy dropping to around 0.11), accompanied by erratic entropy and response-length trajectories.

\section{Conclusion}
\label{sec:conclusion}

In this work, we introduced Self-Guided Process Reward Optimization~(SPRO), a novel and scalable RL framework for LLMs that eliminates the dependency on costly PRMs while preserving the simplicity of outcome-supervised RL. By introducing Cumulative Process Reward~(CPR) as a surrogate for self-guided process signals and proposing Masked Step Advantage~(MSA), our method enables rigorous step-level advantage estimation through shared-prompt comparisons.
Experiments demonstrate that SPRO significantly improves both accuracy and training efficiency. In particular, it addresses two critical challenges in RL for LLMs: 
token efficiency
and 
policy entropy collapse.
The scalability and ease of deployment of SPRO make it particularly suitable for industrial implementation, offering a practical and effective alternative to traditional process reward approaches. 



\bibliographystyle{unsrtnat}
\bibliography{uploads/ref}

@article{rafailov2023direct,
  title={Direct preference optimization: Your language model is secretly a reward model},
  author={Rafailov, Rafael and Sharma, Archit and Mitchell, Eric and Manning, Christopher D and Ermon, Stefano and Finn, Chelsea},
  journal={Advances in Neural Information Processing Systems},
  volume={36},
  pages={53728--53741},
  year={2023}
}

@inproceedings{rafailovrfrom,
  title={From $ r $ to $ {Q}^{*} $: Your Language Model is Secretly a Q-Function},
  author={Rafailov, Rafael and Hejna, Joey and Park, Ryan and Finn, Chelsea},
  year={2024},
  booktitle={First Conference on Language Modeling}
}

@inproceedings{lightman2023let,
  title={Let's verify step by step},
  author={Lightman, Hunter and Kosaraju, Vineet and Burda, Yuri and Edwards, Harrison and Baker, Bowen and Lee, Teddy and Leike, Jan and Schulman, John and Sutskever, Ilya and Cobbe, Karl},
  booktitle={The Twelfth International Conference on Learning Representations},
  year={2023},
}

@inproceedings{wang2024math,
  title={Math-Shepherd: Verify and Reinforce LLMs Step-by-step without Human Annotations},
  author={Wang, Peiyi and Li, Lei and Shao, Zhihong and Xu, Runxin and Dai, Damai and Li, Yifei and Chen, Deli and Wu, Yu and Sui, Zhifang},
  booktitle={Proceedings of the 62nd Annual Meeting of the Association for Computational Linguistics (Volume 1: Long Papers)},
  pages={9426--9439},
  year={2024}
}

@inproceedings{feng2023alphazero,
  title={Alphazero-like Tree-Search can Guide Large Language Model Decoding and Training},
  author={Feng, Xidong and Wan, Ziyu and Wen, Muning and Wen, Ying and Zhang, Weinan and Wang, Jun},
  year={2023},
  booktitle={NeurIPS 2023 Foundation Models for Decision Making Workshop}
}

@article{snell2024scaling,
  title={Scaling llm test-time compute optimally can be more effective than scaling model parameters},
  author={Snell, Charlie and Lee, Jaehoon and Xu, Kelvin and Kumar, Aviral},
  journal={arXiv preprint arXiv:2408.03314},
  year={2024}
}

@article{guo2025deepseek,
  title={DeepSeek-R1 incentivizes reasoning in LLMs through reinforcement learning},
  author={Guo, Daya and Yang, Dejian and Zhang, Haowei and Song, Junxiao and Zhang, Ruoyu and Xu, Runxin and Zhu, Qihao and Ma, Shirong and Wang, Peiyi and Bi, Xiao and others},
  journal={Nature},
  volume= {645},
  pages= {633--638},
  year={2025}
}

@inproceedings{yuan2024free,
  title={Free Process Rewards without Process Labels},
  author={Yuan, Lifan and Li, Wendi and Chen, Huayu and Cui, Ganqu and Ding, Ning and Zhang, Kaiyan and Zhou, Bowen and Liu, Zhiyuan and Peng, Hao},
  booktitle={Forty-second International Conference on Machine Learning},
  year={2025}
}

@inproceedings{allen2024physics,
  title={Physics of language models: part 3.1, knowledge storage and extraction},
  author={Allen-Zhu, Zeyuan and Li, Yuanzhi},
  booktitle={Proceedings of the 41st International Conference on Machine Learning},
  pages={1067--1077},
  year={2024}
}

@article{belinkov2022probing,
  title={Probing Classifiers: Promises, Shortcomings, and Advances},
  author={Belinkov, Yonatan},
  journal={Computational Linguistics},
  volume={48},
  number={1},
  pages={207--219},
  year={2022}
}

@article{shao2024deepseekmath,
  title={Deepseekmath: Pushing the limits of mathematical reasoning in open language models},
  author={Shao, Zhihong and Wang, Peiyi and Zhu, Qihao and Xu, Runxin and Song, Junxiao and Bi, Xiao and Zhang, Haowei and Zhang, Mingchuan and Li, YK and Wu, Y and others},
  journal={arXiv preprint arXiv:2402.03300},
  year={2024}
}

@article{cui2025process,
  title={Process reinforcement through implicit rewards},
  author={Cui, Ganqu and Yuan, Lifan and Wang, Zefan and Wang, Hanbin and Li, Wendi and He, Bingxiang and Fan, Yuchen and Yu, Tianyu and Xu, Qixin and Chen, Weize and others},
  journal={arXiv preprint arXiv:2502.01456},
  year={2025}
}

@misc{qwen2025qwen25technicalreport,
      title={Qwen2.5 Technical Report}, 
      author={Qwen and : and An Yang and Baosong Yang and Beichen Zhang and Binyuan Hui and Bo Zheng and Bowen Yu and Chengyuan Li and Dayiheng Liu and Fei Huang and Haoran Wei and Huan Lin and Jian Yang and Jianhong Tu and Jianwei Zhang and Jianxin Yang and Jiaxi Yang and Jingren Zhou and Junyang Lin and Kai Dang and Keming Lu and Keqin Bao and Kexin Yang and Le Yu and Mei Li and Mingfeng Xue and Pei Zhang and Qin Zhu and Rui Men and Runji Lin and Tianhao Li and Tianyi Tang and Tingyu Xia and Xingzhang Ren and Xuancheng Ren and Yang Fan and Yang Su and Yichang Zhang and Yu Wan and Yuqiong Liu and Zeyu Cui and Zhenru Zhang and Zihan Qiu},
      year={2025},
      eprint={2412.15115},
      archivePrefix={arXiv},
      primaryClass={cs.CL},
      url={https://arxiv.org/abs/2412.15115}, 
}

@inproceedings{sheng2025hybridflow,
  title={Hybridflow: A flexible and efficient rlhf framework},
  author={Sheng, Guangming and Zhang, Chi and Ye, Zilingfeng and Wu, Xibin and Zhang, Wang and Zhang, Ru and Peng, Yanghua and Lin, Haibin and Wu, Chuan},
  booktitle={Proceedings of the Twentieth European Conference on Computer Systems},
  pages={1279--1297},
  year={2025}
}

@article{li2024numinamath,
  title={Numinamath: The largest public dataset in ai4maths with 860k pairs of competition math problems and solutions},
  author={Li, Jia and Beeching, Edward and Tunstall, Lewis and Lipkin, Ben and Soletskyi, Roman and Huang, Shengyi and Rasul, Kashif and Yu, Longhui and Jiang, Albert Q and Shen, Ziju and others},
  journal={Hugging Face repository},
  volume={13},
  pages={9},
  year={2024}
}

@misc{hendrycks2021measuringmathematicalproblemsolving,
      title={Measuring Mathematical Problem Solving With the MATH Dataset}, 
      author={Dan Hendrycks and Collin Burns and Saurav Kadavath and Akul Arora and Steven Basart and Eric Tang and Dawn Song and Jacob Steinhardt},
      year={2021},
      eprint={2103.03874},
      archivePrefix={arXiv},
      primaryClass={cs.LG},
      url={https://arxiv.org/abs/2103.03874}, 
}

@article{lewkowycz2022solving,
  title={Solving quantitative reasoning problems with language models},
  author={Lewkowycz, Aitor and Andreassen, Anders and Dohan, David and Dyer, Ethan and Michalewski, Henryk and Ramasesh, Vinay and Slone, Ambrose and Anil, Cem and Schlag, Imanol and Gutman-Solo, Theo and others},
  journal={Advances in Neural Information Processing Systems},
  volume={35},
  pages={3843--3857},
  year={2022}
}

@article{guo2024deepseek,
  title={DeepSeek-Coder: When the Large Language Model Meets Programming-The Rise of Code Intelligence},
  author={Guo, Daya and Zhu, Qihao and Yang, Dejian and Xie, Zhenda and Dong, Kai and Zhang, Wentao and Chen, Guanting and Bi, Xiao and Wu, Y and Li, YK and others},
  journal={CoRR},
  year={2024}
}

@inproceedings{he2024olympiadbench,
  title={OlympiadBench: A Challenging Benchmark for Promoting AGI with Olympiad-Level Bilingual Multimodal Scientific Problems},
  author={He, Chaoqun and Luo, Renjie and Bai, Yuzhuo and Hu, Shengding and Thai, Zhen and Shen, Junhao and Hu, Jinyi and Han, Xu and Huang, Yujie and Zhang, Yuxiang and others},
  booktitle={Proceedings of the 62nd Annual Meeting of the Association for Computational Linguistics (Volume 1: Long Papers)},
  pages={3828--3850},
  year={2024}
}

@inproceedings{jainlivecodebench,
  title={LiveCodeBench: Holistic and Contamination Free Evaluation of Large Language Models for Code},
  author={Jain, Naman and Han, King and Gu, Alex and Li, Wen-Ding and Yan, Fanjia and Zhang, Tianjun and Wang, Sida and Solar-Lezama, Armando and Sen, Koushik and Stoica, Ion},
  booktitle={The Thirteenth International Conference on Learning Representations},
  year={2024}
}

@article{yu2025dapo,
  title={DAPO: An Open-Source LLM Reinforcement Learning System at Scale},
  author={Yu, Qiying and Zhang, Zheng and Zhu, Ruofei and Yuan, Yufeng and Zuo, Xiaochen and Yue, Yu and Fan, Tiantian and Liu, Gaohong and Liu, Lingjun and Liu, Xin and others},
  journal={CoRR},
  year={2025}
}

@article{qu2025survey,
  title={A survey of efficient reasoning for large reasoning models: Language, multimodality, and beyond},
  author={Qu, Xiaoye and Li, Yafu and Su, Zhaochen and Sun, Weigao and Yan, Jianhao and Liu, Dongrui and Cui, Ganqu and Liu, Daizong and Liang, Shuxian and He, Junxian and others},
  journal={arXiv preprint arXiv:2503.21614},
  year={2025}
}

@article{liu2025understanding,
  title={Understanding r1-zero-like training: A critical perspective},
  author={Liu, Zichen and Chen, Changyu and Li, Wenjun and Qi, Penghui and Pang, Tianyu and Du, Chao and Lee, Wee Sun and Lin, Min},
  journal={arXiv preprint arXiv:2503.20783},
  year={2025}
}

@article{cui2025entropy,
  title={The Entropy Mechanism of Reinforcement Learning for Reasoning Language Models},
  author={Cui, Ganqu and Zhang, Yuchen and Chen, Jiacheng and Yuan, Lifan and Wang, Zhi and Zuo, Yuxin and Li, Haozhan and Fan, Yuchen and Chen, Huayu and Chen, Weize and others},
  journal={arXiv preprint arXiv:2505.22617},
  year={2025}
}

@misc{openai2024o1,
  title = {Learning to Reason with {LLMs}},
  author = {{OpenAI}},
  howpublished = {\url{https://openai.com/index/learning-to-reason-with-llms/}},
  note = {Accessed: 15 March 2025},
  year = {2024}
}

@inproceedings{ahmadian2024rloo,
  title={Back to Basics: Revisiting REINFORCE Style Optimization for Learning from Human Feedback in LLMs},
  author={Ahmadian, Arash and Cremer, Chris and Gall{\'e}, Matthias and Fadaee, Marzieh and Kreutzer, Julia and Pietquin, Olivier and {\"U}st{\"u}n, Ahmet and Hooker, Sara},
  booktitle={Proceedings of the 62nd Annual Meeting of the Association for Computational Linguistics (Volume 1: Long Papers)},
  pages={12248--12267},
  year={2024}
}

@article{team2025kimi,
  title={Kimi k1. 5: Scaling reinforcement learning with llms},
  author={Team, Kimi and Du, Angang and Gao, Bofei and Xing, Bowei and Jiang, Changjiu and Chen, Cheng and Li, Cheng and Xiao, Chenjun and Du, Chenzhuang and Liao, Chonghua and others},
  journal={arXiv preprint arXiv:2501.12599},
  year={2025}
}

@article{uesato2022solving,
  title={Solving math word problems with process-and outcome-based feedback},
  author={Uesato, Jonathan and Kushman, Nate and Kumar, Ramana and Song, Francis and Siegel, Noah and Wang, Lisa and Creswell, Antonia and Irving, Geoffrey and Higgins, Irina},
  journal={arXiv preprint arXiv:2211.14275},
  year={2022}
}

@article{williams1991function,
  title={Function optimization using connectionist reinforcement learning algorithms},
  author={Williams, Ronald J and Peng, Jing},
  journal={Connection Science},
  volume={3},
  number={3},
  pages={241--268},
  year={1991},
  publisher={Taylor \& Francis}
}

@book{ziebart2010modeling,
  title={Modeling purposeful adaptive behavior with the principle of maximum causal entropy},
  author={Ziebart, Brian D},
  year={2010},
  publisher={Carnegie Mellon University}
}

@article{ouyang2022training,
  title={Training language models to follow instructions with human feedback},
  author={Ouyang, Long and Wu, Jeffrey and Jiang, Xu and Almeida, Diogo and Wainwright, Carroll and Mishkin, Pamela and Zhang, Chong and Agarwal, Sandhini and Slama, Katarina and Ray, Alex and others},
  journal={Advances in neural information processing systems},
  volume={35},
  pages={27730--27744},
  year={2022}
}

@article{garg2021iq,
  title={Iq-learn: Inverse soft-q learning for imitation},
  author={Garg, Divyansh and Chakraborty, Shuvam and Cundy, Chris and Song, Jiaming and Ermon, Stefano},
  journal={Advances in Neural Information Processing Systems},
  volume={34},
  pages={4028--4039},
  year={2021}
}

@inproceedings{hejna2024contrastive,
  title={Contrastive preference learning: Learning from human feedback without reinforcement learning},
  author={Hejna, Joey and Rafailov, Rafael and Sikchi, Harshit and Finn, Chelsea and Niekum, Scott and Knox, W Bradley and Sadigh, Dorsa},
  booktitle={The Twelfth International Conference on Learning Representations},
  year={2024}
}

@article{nachum2017bridging,
  title={Bridging the gap between value and policy based reinforcement learning},
  author={Nachum, Ofir and Norouzi, Mohammad and Xu, Kelvin and Schuurmans, Dale},
  journal={Advances in neural information processing systems},
  volume={30},
  year={2017}
}

@article{watson2023coherent,
  title={Coherent soft imitation learning},
  author={Watson, Joe and Huang, Sandy and Heess, Nicolas},
  journal={Advances in Neural Information Processing Systems},
  volume={36},
  pages={14540--14583},
  year={2023}
}

@inproceedings{aspillaga2021inspecting,
  title={Inspecting the concept knowledge graph encoded by modern language models},
  author={Aspillaga, Carlos and Mendoza, Marcelo and Soto, {\'A}lvaro},
  booktitle={Findings of the Association for Computational Linguistics: ACL-IJCNLP 2021},
  pages={2984--3000},
  year={2021}
}

@inproceedings{conneau2018you,
  title={What you can cram into a single vector: Probing sentence embeddings for linguistic properties},
  author={Conneau, Alexis and Kruszewski, Germ{\'a}n and Lample, Guillaume and Barrault, Lo{\"\i}c and Baroni, Marco},
  booktitle={Proceedings of the 56th Annual Meeting of the Association for Computational Linguistics (Volume 1: Long Papers)},
  pages={2126--2136},
  year={2018}
}

@inproceedings{dai2022knowledge,
  title={Knowledge Neurons in Pretrained Transformers},
  author={Dai, Damai and Dong, Li and Hao, Yaru and Sui, Zhifang and Chang, Baobao and Wei, Furu},
  booktitle={Proceedings of the 60th Annual Meeting of the Association for Computational Linguistics (Volume 1: Long Papers)},
  pages={8493--8502},
  year={2022}
}

@inproceedings{geva2021transformer,
  title={Transformer Feed-Forward Layers Are Key-Value Memories},
  author={Geva, Mor and Schuster, Roei and Berant, Jonathan and Levy, Omer},
  booktitle={Proceedings of the 2021 Conference on Empirical Methods in Natural Language Processing},
  pages={5484--5495},
  year={2021}
}

@inproceedings{zhouweak,
  title={Weak-to-Strong Search: Align Large Language Models via Searching over Small Language Models},
  author={Zhou, Zhanhui and Liu, Zhixuan and Liu, Jie and Dong, Zhichen and Yang, Chao and Qiao, Yu},
  booktitle={The Thirty-eighth Annual Conference on Neural Information Processing Systems},
  year={2024}
}

@article{ethayarajh2024kto,
  title={Kto: Model alignment as prospect theoretic optimization},
  author={Ethayarajh, Kawin and Xu, Winnie and Muennighoff, Niklas and Jurafsky, Dan and Kiela, Douwe},
  journal={arXiv preprint arXiv:2402.01306},
  year={2024}
}

@inproceedings{chen2024bootstrapping,
  title={Bootstrapping Language Models with DPO Implicit Rewards},
  author={Chen, Changyu and Liu, Zichen and Du, Chao and Pang, Tianyu and Liu, Qian and Sinha, Arunesh and Varakantham, Pradeep and Lin, Min},
  booktitle={ICML 2024 Workshop on Models of Human Feedback for AI Alignment},
  year={2024}
}

@article{lambert2024t,
  title={Tulu 3: Pushing frontiers in open language model post-training},
  author={Lambert, Nathan and Morrison, Jacob and Pyatkin, Valentina and Huang, Shengyi and Ivison, Hamish and Brahman, Faeze and Miranda, Lester James V and Liu, Alisa and Dziri, Nouha and Lyu, Shane and others},
  journal={arXiv preprint arXiv:2411.15124},
  year={2024}
}

@article{vaswani2017attention,
  title={Attention is all you need},
  author={Vaswani, Ashish and Shazeer, Noam and Parmar, Niki and Uszkoreit, Jakob and Jones, Llion and Gomez, Aidan N and Kaiser, {\L}ukasz and Polosukhin, Illia},
  journal={Advances in neural information processing systems},
  volume={30},
  year={2017}
}

@article{schulman2017proximal,
  title={Proximal policy optimization algorithms},
  author={Schulman, John and Wolski, Filip and Dhariwal, Prafulla and Radford, Alec and Klimov, Oleg},
  journal={arXiv preprint arXiv:1707.06347},
  year={2017}
}

@article{williams1992simple,
  title={Simple statistical gradient-following algorithms for connectionist reinforcement learning},
  author={Williams, Ronald J},
  journal={Machine learning},
  volume={8},
  pages={229--256},
  year={1992},
  publisher={Springer}
}

@article{hu2025reinforce++,
  title={Reinforce++: A simple and efficient approach for aligning large language models},
  author={Hu, Jian},
  journal={arXiv preprint arXiv:2501.03262},
  year={2025}
}

@article{bai2022constitutional,
  title={Constitutional ai: Harmlessness from ai feedback},
  author={Bai, Yuntao and Kadavath, Saurav and Kundu, Sandipan and Askell, Amanda and Kernion, Jackson and Jones, Andy and Chen, Anna and Goldie, Anna and Mirhoseini, Azalia and McKinnon, Cameron and others},
  journal={arXiv preprint arXiv:2212.08073},
  year={2022}
}

@inproceedings{li2023making,
  title={Making language models better reasoners with step-aware verifier},
  author={Li, Yifei and Lin, Zeqi and Zhang, Shizhuo and Fu, Qiang and Chen, Bei and Lou, Jian-Guang and Chen, Weizhu},
  booktitle={Proceedings of the 61st Annual Meeting of the Association for Computational Linguistics (Volume 1: Long Papers)},
  pages={5315--5333},
  year={2023}
}

@article{yu2023outcome,
  title={Outcome-supervised verifiers for planning in mathematical reasoning},
  author={Yu, Fei and Gao, Anningzhe and Wang, Benyou},
  journal={arXiv preprint arXiv:2311.09724},
  volume={2},
  number={6},
  year={2023}
}

@inproceedings{zhu2023solving,
  title={Solving Math Word Problems via Cooperative Reasoning induced Language Models},
  author={Zhu, Xinyu and Wang, Junjie and Zhang, Lin and Zhang, Yuxiang and Huang, Yongfeng and Gan, Ruyi and Zhang, Jiaxing and Yang, Yujiu},
  booktitle={Proceedings of the 61st Annual Meeting of the Association for Computational Linguistics (Volume 1: Long Papers)},
  pages={4471--4485},
  year={2023}
}

@article{ma2023let,
  title={Let's reward step by step: Step-Level reward model as the Navigators for Reasoning},
  author={Ma, Qianli and Zhou, Haotian and Liu, Tingkai and Yuan, Jianbo and Liu, Pengfei and You, Yang and Yang, Hongxia},
  journal={CoRR},
  year={2023}
}

@inproceedings{jiao2024learning,
  title={Learning Planning-based Reasoning by Trajectories Collection and Process Reward Synthesizing},
  author={Jiao, Fangkai and Qin, Chengwei and Liu, Zhengyuan and Chen, Nancy and Joty, Shafiq},
  booktitle={Proceedings of the 2024 Conference on Empirical Methods in Natural Language Processing},
  pages={334--350},
  year={2024}
}

@inproceedings{havrilla2024glore,
  title={GLoRe: when, where, and how to improve LLM reasoning via global and local refinements},
  author={Havrilla, Alex and Raparthy, Sharath and Nalmpantis, Christoforos and Dwivedi-Yu, Jane and Zhuravynski, Maksym and Hambro, Eric and Raileanu, Roberta},
  booktitle={Proceedings of the 41st International Conference on Machine Learning},
  pages={17719--17733},
  year={2024}
}

@inproceedings{zhong2024dpo,
  title={DPO Meets PPO: Reinforced Token Optimization for RLHF},
  author={Zhong, Han and Feng, Guhao and Xiong, Wei and Cheng, Xinle and Zhao, Li and He, Di and Bian, Jiang and Wang, Liwei},
  booktitle={ICML 2024 Workshop on Models of Human Feedback for AI Alignment},
  year={2024}
}

@article{zhang2024rest,
  title={Rest-mcts*: Llm self-training via process reward guided tree search},
  author={Zhang, Dan and Zhoubian, Sining and Hu, Ziniu and Yue, Yisong and Dong, Yuxiao and Tang, Jie},
  journal={Advances in Neural Information Processing Systems},
  volume={37},
  pages={64735--64772},
  year={2024}
}

@inproceedings{chen2024step,
  title={Step-level Value Preference Optimization for Mathematical Reasoning},
  author={Chen, Guoxin and Liao, Minpeng and Li, Chengxi and Fan, Kai},
  booktitle={Findings of the Association for Computational Linguistics: EMNLP 2024},
  pages={7889--7903},
  year={2024}
}

@inproceedings{xieexploratory,
  title={Exploratory Preference Optimization: Harnessing Implicit Q*-Approximation for Sample-Efficient RLHF},
  author={Xie, Tengyang and Foster, Dylan J and Krishnamurthy, Akshay and Rosset, Corby and Awadallah, Ahmed Hassan and Rakhlin, Alexander},
  booktitle={The Thirteenth International Conference on Learning Representations},
  year={2025}
}

@inproceedings{guo2022efficient,
  title={Efficient (soft) q-learning for text generation with limited good data},
  author={Guo, Han and Tan, Bowen and Liu, Zhengzhong and Xing, Eric and Hu, Zhiting},
  booktitle={Findings of the Association for Computational Linguistics: EMNLP 2022},
  pages={6969--6991},
  year={2022}
}

@article{schulman2017equivalence,
  title={Equivalence between policy gradients and soft q-learning},
  author={Schulman, John and Chen, Xi and Abbeel, Pieter},
  journal={arXiv preprint arXiv:1704.06440},
  year={2017}
}

@article{bai2022training,
  title={Training a helpful and harmless assistant with reinforcement learning from human feedback},
  author={Bai, Yuntao and Jones, Andy and Ndousse, Kamal and Askell, Amanda and Chen, Anna and DasSarma, Nova and Drain, Dawn and Fort, Stanislav and Ganguli, Deep and Henighan, Tom and others},
  journal={arXiv preprint arXiv:2204.05862},
  year={2022}
}

@article{zheng2023judging,
  title={Judging llm-as-a-judge with mt-bench and chatbot arena},
  author={Zheng, Lianmin and Chiang, Wei-Lin and Sheng, Ying and Zhuang, Siyuan and Wu, Zhanghao and Zhuang, Yonghao and Lin, Zi and Li, Zhuohan and Li, Dacheng and Xing, Eric and others},
  journal={Advances in neural information processing systems},
  volume={36},
  pages={46595--46623},
  year={2023}
}

@article{liu2024skywork,
  title={Skywork-Reward: Bag of Tricks for Reward Modeling in LLMs},
  author={Liu, Chris Yuhao and Zeng, Liang and Liu, Jiacai and Yan, Rui and He, Jujie and Wang, Chaojie and Yan, Shuicheng and Liu, Yang and Zhou, Yahui},
  journal={arXiv preprint arXiv:2410.18451},
  year={2024}
}

@article{grattafiori2024llama,
  title={The llama 3 herd of models},
  author={Grattafiori, Aaron and Dubey, Abhimanyu and Jauhri, Abhinav and Pandey, Abhinav and Kadian, Abhishek and Al-Dahle, Ahmad and Letman, Aiesha and Mathur, Akhil and Schelten, Alan and Vaughan, Alex and others},
  journal={arXiv preprint arXiv:2407.21783},
  year={2024}
}

\newpage
\appendix
\label{sec:appendix}




\section{Experimental Setup}
\label{appendix:exp_setup}
\textbf{Base model and Dataset.}\hspace{1em}We adopt Eurus-2-7B-SFT~\cite{cui2025process} as our base model, which is fine-tuned from Qwen2.5-Math-7B-Base~\cite{qwen2025qwen25technicalreport} on mathematical and programming tasks.
The RL dataset is Eurus-2-RL-Data~\cite{cui2025process}, which contains math problems ranging from the high school level to International Mathematical Olympiad competition questions, as well as programming tasks primarily at the competitive programming level. 

\textbf{Resources and Hyper-parameters.}\hspace{1em}All experiments are conducted on a single node equipped with 8x NVIDIA A800 GPUs (80G memory) using the veRL framework~\cite{sheng2025hybridflow}.
For optimization, we use the AdamW optimizer with a cosine decay learning rate schedule, initialized at $1 \times 10^{-6}$. 
For rollout stage, we collect 256 prompts with an oversampling factor of 2 and generate 4 responses per prompt. We apply an accuracy filtering threshold between 0.2 and 0.8, and prompts that fall within this range are prioritized. 
For training, the batch size is 256 and the micro batch size is 16. The KL coefficient is set to 0 which disables the kl-in-loss in veRL but enables kl-in-reward aligned with~\cref{eq:Rt_defination} and the entropy coefficient is set to 0.001 for all experiments.

\textbf{Outcome Verifiers.}\hspace{1em}We employ rule-based reward modeling for outcome verification, aligning with common practices in the recent literature~\cite{guo2025deepseek,cui2025process,lambert2024t}. 
Specifically, we assign a score of 1 if a math answer matches the ground truth and 0 otherwise. For code generation task, we compute the reward as the proportion of passed test cases.


\textbf{Evaluation Benchmarks.}\hspace{1em}We evaluate models on AIME 2024~\cite{li2024numinamath}, AMC~\cite{li2024numinamath}, MATH-500~\cite{hendrycks2021measuringmathematicalproblemsolving}, Minerva Math~\cite{lewkowycz2022solving}, OlympiadBench~\cite{he2024olympiadbench}, LeetCode~\cite{guo2024deepseek}, and LiveCodeBench(v2)~\cite{jainlivecodebench}.

\textbf{Baseline Algorithms.}\hspace{1em}We use vanilla GRPO~\cite{shao2024deepseekmath} as the baseline algorithm for outcome-supervised RL training, and the previous SoTA PRIME~\cite{cui2025process} for process-supervised RL training. We maintain consistency in the base model, training dataset, and key hyperparameters with those employed in our proposed method.

\section{Additional Experimental Results}

\subsection{Test Accuracy}

\label{appendix:test acc table}

\cref{tab:rl_comparison} shows the test accuracy of SPRO and other baselines during training.
\begin{table*} 
  \caption{\textbf{Comparison of test accuracy} between SPRO and other baselines using the same base model and training data. We present the highest pass@1 test accuracy scores of four math benchmarks and two code benchmarks.}
  \label{tab:rl_comparison}
  \centering
  \begin{tabular*}{0.96\textwidth}{@{\extracolsep{\fill}} l c c c c c c c c}
    \toprule
    Methods   & AMC & MATH  & Olympiad & K12 & CodeForces & CodeContests & Avg. \\
    \midrule
    Base Model & 21.1 & 48.1 & 13.9 & 40.0 & 5.2 & 11.3 & 22.8 \\
    vanilla GRPO    & 24.3 & 52.0 & 21.3 & 49.5 & 28.9 & 29.2 & 34.2 \\
    PRIME & 31.2 & 52.7 & 25.4 & 54.6 & 26.4 & 25.9 & 36.0 \\
    \textbf{Ours (SPRO)} & \textbf{31.9} & \textbf{53.6} & \textbf{28.2} & \textbf{55.7} & \textbf{29.6} & \textbf{32.4} & \textbf{38.6} \\
    \bottomrule
  \end{tabular*}
\end{table*}

\subsection{Response Length}

\cref{tab:rl_seq_len} shows the response length of SPRO and other baselines during evaluation.
\begin{table}
  \caption{\textbf{Comparison of response length} between SPRO and other baselines under identical conditions: same base model, training data, and 432 training steps.}
  \label{tab:rl_seq_len}
  \centering
  \setlength{\tabcolsep}{1.5pt} 
  
  \begin{tabular*}{\linewidth}{@{\extracolsep{\fill}} l c c c c c}
    \toprule
    Methods   & AIME & AMC  & MATH & Minerva & Olympiad  \\
    \midrule
    Base Model & 1893 & 1571 & 951 & 762 & 1206 \\
    vanilla GRPO    & 2028 & 1635 & 927 & 816 & 1263 \\
    PRIME & 1731 & 1496 & 920 & 983 & 1323  \\
    \textbf{Ours (SPRO)} & \textbf{1349} & \textbf{1024} & \textbf{685} & \textbf{701} & \textbf{947}  \\
    \bottomrule
  \end{tabular*}
\end{table}

\subsection{Per-step Wall-clock Breakdown}
\label{appendix:wall clock breakdown}
For a more transparent efficiency comparison, we further provide a per-step wall-clock breakdown of SPRO. Each training step is decomposed into three stages, including rollout sampling, advantage and reward computation, and parameter update, and the corresponding time cost of each stage is reported in Table~\ref{tab:wallclock}.

\begin{table}
  \caption{\textbf{Wall-clock time breakdown per training step.} We report the average wall-clock time per step and its decomposition into sampling, advantage and reward-model computation, and parameter update.}
  \label{tab:wallclock}
  \centering
  \begin{tabular}{lcccc}
    \toprule
    Method & Wall-clock/step (s) & Sampling (s) & Adv/RM (s) & Update (s) \\
    \midrule
    PRIME & 1068.0 & 210.8 & 475.1 & 381.9 \\
    GRPO & 272.3 & 195.7 & 0.07 & 76.4 \\
    SPRO (Ours) & 228.0 & 169.3 & 0.20 & 58.3 \\
    \bottomrule
  \end{tabular}
\end{table}

\subsection{Cross-family Generalization}
\label{appendix:llama}
We further evaluate SPRO on Llama-3.1-8B-Instruct using Eurus-2-RL-Data, the same RL dataset used for the Qwen-family experiments. After training for 224 steps, we obtain the results shown in Table~\ref{tab:llama_generalization}.

Note that the base model Llama-3.1-8B-Instruct is not a fine-tuned model for math and programming tasks, which makes Eurus-2-RL-Data more challenging in this setting. Nevertheless, SPRO still outperforms both GRPO and PRIME, improving over the baselines by 6.7\%. This result suggests that the benefit of SPRO is not specific to the Qwen family, but generalizes to other model families.

\begin{table}[t]
  \caption{\textbf{Generalization results on Llama-3.1-8B-Instruct.} We report the average test performance after 224 RL training steps on Eurus-2-RL-Data.}
  \label{tab:llama_generalization}
  \centering
  \begin{tabular}{lcccc}
    \toprule
    Methods & Base Model & vanilla GRPO & PRIME & Ours (SPRO) \\
    \midrule
    Avg. Test & 20.4 & 21.0 & 20.9 & 22.4 \\
    \bottomrule
  \end{tabular}
\end{table}

\subsection{Beyond RLVR}
\label{appendix:beyond rlvr}
To examine whether the advantage of SPRO extends beyond math and code reasoning, we evaluate it on a general-language alignment benchmark, full-hh-rlhf, a public helpful-and-harmless preference dataset~\cite{bai2022training}. For this experiment, we train both GRPO and SPRO from Llama-3.2-3B-Instruct for 128 steps. During training, we use Skywork-Reward-Llama-3.1-8B~\cite{liu2024skywork} as the reward model.

For evaluation, we follow a widely used pairwise LLM-as-a-judge protocol based on~\cite{zheng2023judging}, using a pairwise comparison prompt adapted to the helpful or harmless dialogue setting. We sample 300 prompts from the held-out validation split and use GPT-4o as the judge. To reduce position bias, we randomly swap the presentation order of the two candidate responses for each prompt. The results are shown in Table~\ref{tab:alignment_generalization}.

These results indicate that SPRO also improves performance in a general-language alignment setting, beyond math and code reasoning.

\begin{table}
  \caption{\textbf{Pairwise evaluation on the full-hh-rlhf benchmark.} We report the win rates of SPRO against GRPO under GPT-4o judgment on 300 held-out prompts.}
  \label{tab:alignment_generalization}
  \centering
  \begin{tabular}{lccc}
    \toprule
    Compared Methods & Win (\%) & Tie (\%) & Lose (\%) \\
    \midrule
    SPRO vs. GRPO & 64.33 & 18.67 & 17.00 \\
    \bottomrule
  \end{tabular}
\end{table}

\section{Preliminaries}
\label{sec:preliminaries}
In this section, we first introduce the token-level MDP for large language models, along with some definitions of reward and objective function in reinforcement learning.
\subsection{Token-level MDP for Large Language Models}

Following with~\citet{rafailovrfrom},~\citet{yuan2024free} and~\citet{cui2025process}, the token-level MDP is defined as a tuple $\mathcal{M} = \left( \mathcal{S}, \mathcal{A}, f, r, \rho \right)$.
\begin{enumerate}[leftmargin=1.0em, topsep=0pt, itemsep=0pt, label=\textbullet]
	\item The action space $\mathcal{A}$ consists of the token vocabulary of any given large language model, while the state space $\mathcal{S}$ comprises all sub-sequences during the inference process. For example, state $\bs_t$ at timestep $t$ could be represented as $\bs_t=\left(\mathbf{x}, \mathbf{y}_{< t}\right)$, where $\mathbf{x}$ is the initial input~(prompt) and $\mathbf{y}_{< t}$ is the sequence of tokens generated up to step $t-1$. 
	\item $f(\bs_t, \ba_t)$ represents a state transition model that updates the state $\bs_{t+1}$ by concatenating the newly generated token $\ba_t$ to $\bs_{t}$. Formally, this can be expressed as $\bs_{t+1} = f(\bs_{t}, a_{t})$. 
	\item $\rho(\bs_t)$ represents a state distribution constraint that limits the sampling range for each state $\bs_t$.
	\item $r(\bs_t, \ba_t)$ denotes the token-level reward given after the model outputs token $\ba_t$ with input state $\bs_t$.
\end{enumerate}

\subsection{Maximum Entropy Reinforcement Learning in the Token-level MDP}
\label{sec:2_MaxEnt}

Given a well-defined token-level MDP $\mathcal{M} = \left( \mathcal{S}, \mathcal{A}, f, r, \rho \right)$, we can optimize the process reinforcement policy $\pi_\theta$ by using the following entropy-augmented~\cite{williams1991function, ziebart2010modeling,schulman2017equivalence}, KL-constrained objective~\cite{ouyang2022training,rafailovrfrom}:
\begin{eqnarray}\label{eq:MaxEnt}
	\max_{\pi_\theta} \, \mathbb{E}_{
		\substack{
			\bs_0 \sim \rho(\bs_0), \\
			\ba_t \sim \pi_\theta(\cdot|\bs_t)
		}
	} \left[ 
	\sum_{t=0}^T \left(( 
	\underbrace{r(\bs_t, \ba_t)}_{\text{token reward}} 
	+ \beta \underbrace{\log{\pi_{\mathrm{ref}}(\ba_t|\bs_t)}}_{\text{KL penalty}} 
	\right) + \beta \underbrace{\mathcal{H}(\pi_\theta)}_{\text{entropy}} 
	\right].
\end{eqnarray}
As mentioned in~\citet{rafailovrfrom} and~\citet{ziebart2010modeling}, in
the general maximum entropy RL setting, the fixed point solution of~\cref{eq:MaxEnt} is given as:
\begin{eqnarray}\label{eq:policy_function}
	\pi^*(\mathbf{a}_t|\mathbf{s}_t) = e^{\left(Q^*(\mathbf{s}_t,\mathbf{a}_t) - V^*(\mathbf{s}_t)\right)/\beta},
\end{eqnarray}
where $\pi^*(\mathbf{a}|\mathbf{s})$ is the optimal policy and $Q^*(\mathbf{s},\mathbf{a})$ is the corresponding optimal soft $Q$-function. The optimal value function $V^*$ is defined as:
\begin{eqnarray}\label{eq:value_function}
	V^*(\mathbf{s}_t) = \beta \log \sum_{\mathbf{a} \in \mathcal{A}} e^{Q^*(\mathbf{s}_t,\mathbf{a})/\beta}.
\end{eqnarray}
As shown in ~\cref{eq:policy_function}, the relationship between the reward function $r(\bs, \ba)$ and optimal policy $\pi(\ba|\bs)$ is not a direct mapping. Instead, the policy is expressed through $Q$-function and V-function, which themselves represent estimates of total future returns. 

To further investigate the reward-policy relationship, Rafailov et al. introduced a modified 
bellman equation between reward function and value functions using KL-divergence penalty~\yrcite{rafailovrfrom}, where:
\begin{eqnarray}
	Q^*(\bs_t, \ba_t) = 
	\begin{cases}
		r(\bs_t, \ba_t) + \beta \log \pi_{\text{ref}}(\ba_t|\bs_t) + V^*(\bs_{t+1}), & \text{if } \bs_{t+1} \text{ is not terminal}, \\
		r(\bs_t, \ba_t) + \beta \log \pi_{\text{ref}}(\ba_t|\bs_t), & \text{if } \bs_{t+1} \text{ is terminal}.
	\end{cases}
	\label{eq:q_function}
\end{eqnarray}
It should be noted that some prior work~\cite{garg2021iq,hejna2024contrastive} has also proposed similar definitions, but they required an assumption that the discount factor $\gamma<1$. Rafailov et al. further proved that the relationship in~\cref{eq:q_function} is indeed one-to-one in the token MDP under mild assumptions~\yrcite{rafailovrfrom}, which means there is a \textbf{bijection} between reward functions and corresponding optimal $Q$-functions in the token-level MDP.

By log-linearizing the optimal policy fixed point in Eq.~(\ref{eq:policy_function}) 
\begin{eqnarray}\label{eq:log-linearize_policy}
	\beta \log \pi^*(\mathbf{a}_t | \mathbf{s}_t) = Q^*(\mathbf{s}_t, \mathbf{a}_t) - V^*(\mathbf{s}_t).
\end{eqnarray}
Substituting in the Bellman equation from~\cref{eq:q_function}~\cite{nachum2017bridging,watson2023coherent}, we have the following function:
\begin{eqnarray}
	r(\mathbf{s}_t,\mathbf{a}_t) + V^*(\mathbf{s}_{t + 1}) - V^*(\mathbf{s}_t) = \beta\log\frac{\pi^*(\mathbf{a}_t|\mathbf{s}_t)}{\pi_{\text{ref}}(\mathbf{a}_t|\mathbf{s}_t)}.
	\label{eq:new_r_function}
\end{eqnarray}
This establishes a mathematical relationship between the process reward function and the policy model.

\section{Related Work}
\label{sec:related_work}
\textbf{From RLHF to RLVR.}\hspace{1em}
In the LLM domain, RL is initially employed to align the model outputs with human preference, a paradigm known as Reinforcement Learning with Human Feedback (RLHF)~\cite{ouyang2022training, bai2022constitutional}. As the focus shifts from alignment to enhancing model reasoning abilities in domains such as math and code, Reinforcement Learning with Verifiable Rewards (RLVR) emerges as a new trend. In this setting, outcome correctness can be verified against ground truth. During the transition, REINFORCE-style algorithms~\cite{shao2024deepseekmath,ahmadian2024rloo,liu2025understanding,williams1992simple,yu2025dapo,hu2025reinforce++}, which rely on Monte Carlo estimator of trajectory returns instead of value functions as used in PPO~\cite{schulman2017proximal}, have demonstrated superior efficiency and performance. However, most existing RLVR methods remain outcome-supervised, limiting their ability to provide well-defined process rewards during training.

\textbf{Explicit Rewards.}\hspace{1em}
\citet{uesato2022solving} and \citet{lightman2023let} show that PRMs outperform ORMs on reasoning tasks. Building on this, subsequent works~\cite{li2023making,yu2023outcome,zhu2023solving,ma2023let,jiao2024learning,havrilla2024glore} train PRMs to either verify reasoning steps or predict the final return, enabling improved trajectory collection through reranking or filtering. Other works~\cite{feng2023alphazero,zhang2024rest,chen2024step} train PRMs for tree-search.

\textbf{Implicit Rewards.}\hspace{1em}Implicit rewards are broadly adopted in LLM alignment~\cite{rafailovrfrom,ethayarajh2024kto,zhouweak,chen2024bootstrapping}.~\citet{rafailovrfrom} show that DPO implicitly learns a Q-function.~\citet{yuan2024free} extend this idea to CE objective.~\citet{zhouweak} use dense implicit rewards for beam search.~\citet{cui2025process} and~\citet{zhong2024dpo} train standalone models to generate implicit rewards for RL training.  



\section{Case Study}
\label{appendix:case study}
\begin{longtable}{@{}p{\dimexpr\textwidth-2\tabcolsep\relax}@{}}
\toprule
\textbf{Question:} \\
Simplify $\frac{1+\sqrt{2}}{2+\sqrt{3}}$. Your solution can be converted to the form $A(1+\sqrt{B})-(\sqrt{C}+\sqrt{D})$, where $A$, $B$, $C$, and $D$ are positive integers. What is $A+B+C+D$? \\
\midrule
\textbf{Ours (SPRO):}  \\
 The given expression is $\frac{1+\sqrt{2}}{2+\sqrt{3}}$. To simplify this, we can rationalize the denominator by multiplying both the numerator and the denominator by the conjugate of the denominator, which is $2-\sqrt{3}$.\\  Multiply the numerator and the denominator by the conjugate of the denominator: $\frac{1+\sqrt{2}}{2+\sqrt{3}} \cdot \frac{2-\sqrt{3}}{2-\sqrt{3}} = \frac{(1+\sqrt{2})(2-\sqrt{3})}{(2+\sqrt{3})(2-\sqrt{3})}$.\\ Expand the numerator: $(1+\sqrt{2})(2-\sqrt{3}) = 2 - \sqrt{3} + 2\sqrt{2} - \sqrt{2}\sqrt{3} = 2 - \sqrt{3} + 2\sqrt{2} - \sqrt{6}$.\\ Expand the denominator: $(2+\sqrt{3})(2-\sqrt{3}) = 2^2 - (\sqrt{3})^2 = 4 - 3 = 1$.\\ So, the expression simplifies to: $\frac{2 - \sqrt{3} + 2\sqrt{2} - \sqrt{6}}{1} = 2 - \sqrt{3} + 2\sqrt{2} - \sqrt{6}$.\\ Rearrange the terms to match the form $A(1+\sqrt{B})-(\sqrt{C}+\sqrt{D})$: $2 + 2\sqrt{2} - \sqrt{3} - \sqrt{6} = 2(1 + \sqrt{2}) - (\sqrt{3} + \sqrt{6})$.\\ Now we have the expression in the desired form, where $A = 2$, $B = 2$, $C = 3$, and $D = 6$.\\  The final expression is $2(1+\sqrt{2})-(\sqrt{3}+\sqrt{6})$, and the sum of the constants is $A+B+C+D = 2+2+3+6 = 13$. Therefore, the answer is 13.    \\
\midrule
\textbf{GRPO:}  \\
The problem asks us to simplify the given expression $\frac{1+\sqrt{2}}{2+\sqrt{3}}$ and then convert it into a specific form. We need to identify the key elements, constraints, and objectives. The key elements are the given expression and the target form. The objective is to simplify the expression and then convert it into the required form. To simplify the given expression, we can start by rationalizing the denominator, which means eliminating the radical in the denominator. We can achieve this by multiplying both the numerator and the denominator by the conjugate of the denominator, which is $2-\sqrt{3}$. This will help us to get rid of the radical in the denominator and simplify the expression. The expression becomes $\frac{(1+\sqrt{2})(2-\sqrt{3})}{(2+\sqrt{3})(2-\sqrt{3})}$.\\ Now, we can simplify the numerator and the denominator separately. Now, let's simplify the numerator and the denominator separately. The numerator becomes $(1+\sqrt{2})(2-\sqrt{3}) = 2 - \sqrt{3} + 2\sqrt{2} - \sqrt{6}$. The denominator becomes $(2+\sqrt{3})(2-\sqrt{3}) = 4 - 3 = 1$. So, the simplified expression is $2 - \sqrt{3} + 2\sqrt{2} - \sqrt{6}$.\\ We can rewrite this expression in the form $A(1+\sqrt{B})-(\sqrt{C}+\sqrt{D})$ by rearranging the terms. Let's try to group the terms with the same radical together. We can rewrite the expression $2 - \sqrt{3} + 2\sqrt{2} - \sqrt{6}$ as $2 + 2\sqrt{2} - \sqrt{3} - \sqrt{6}$.\\ Now, we can factor out a common term from the first two terms: $2(1 + \sqrt{2}) - \sqrt{3} - \sqrt{6}$.\\ We can see that the expression is now in the form $A(1+\sqrt{B})-(\sqrt{C}+\sqrt{D})$, where $A=2$, $B=2$, $C=3$, and $D=6$.\\ Therefore, we have successfully converted the expression into the required form. Now, we can calculate the sum $A+B+C+D$ to get the final answer. The given expression $\frac{1+\sqrt{2}}{2+\sqrt{3}}$ can be simplified by rationalizing the denominator and then rearranging the terms to match the required form. After simplifying, we get $2(1 + \sqrt{2}) - \sqrt{3} - \sqrt{6}$, which is in the form $A(1+\sqrt{B})-(\sqrt{C}+\sqrt{D})$. Therefore, $A=2$, $B=2$, $C=3$, and $D=6$. The sum $A+B+C+D$ is $2+2+3+6 = 13$. \\
13  \\
\midrule
\textbf{PRIME:}  \\
The problem asks us to simplify the given expression $\frac{1+\sqrt{2}}{2+\sqrt{3}}$ and then convert it to a specific form. We need to identify the key elements, constraints, and objectives. The key elements are the given expression and the target form. The constraint is that the solution must be in the specified form. The objective is to simplify the expression and find the values of $A$, $B$, $C$, and $D$. To simplify the given expression $\frac{1+\sqrt{2}}{2+\sqrt{3}}$, we can start by rationalizing the denominator. This involves multiplying the numerator and denominator by the conjugate of the denominator to eliminate the radical in the denominator. The conjugate of $2+\sqrt{3}$ is $2-\sqrt{3}$. We will multiply the expression by $\frac{2-\sqrt{3}}{2-\sqrt{3}}$, which is equivalent to 1.\\ The expression can be broken down into the following steps:1. Multiply the numerator and denominator by the conjugate of the denominator: $\frac{1+\sqrt{2}}{2+\sqrt{3}} \cdot \frac{2-\sqrt{3}}{2-\sqrt{3}}$ 2. Simplify the numerator and denominator separately. 3. Combine the simplified numerator and denominator to obtain the simplified expression\\We will now perform the multiplication and simplification steps outlined in the previous action.\\1. Multiply the numerator and denominator by the conjugate of the denominator:$\frac{1+\sqrt{2}}{2+\sqrt{3}} \cdot \frac{2-\sqrt{3}}{2-\sqrt{3}} = \frac{(1+\sqrt{2})(2-\sqrt{3})}{(2+\sqrt{3})(2-\sqrt{3})}$\\2. Simplify the numerator:$(1+\sqrt{2})(2-\sqrt{3}) = 2 - \sqrt{3} + 2\sqrt{2} - \sqrt{6}$\\3. Simplify the denominator:$(2+\sqrt{3})(2-\sqrt{3}) = 2^2 - (\sqrt{3})^2 = 4 - 3 = 1$\\4. Combine the simplified numerator and denominator:$\frac{2 - \sqrt{3} + 2\sqrt{2} - \sqrt{6}}{1} = 2 - \sqrt{3} + 2\sqrt{2} - \sqrt{6}$\\We have obtained the simplified expression $2 - \sqrt{3} + 2\sqrt{2} - \sqrt{6}$. Now, we need to convert this expression to the target form $A(1+\sqrt{B})-(\sqrt{C}+\sqrt{D})$.\\We can start by rearranging the terms to group the constants and the square root terms:$2 - \sqrt{3} + 2\sqrt{2} - \sqrt{6} = (2 + 2\sqrt{2}) - (\sqrt{3} + \sqrt{6})$\\We can factor out a common factor from the first group of terms:$2 + 2\sqrt{2} = 2(1 + \sqrt{2})$\\Now, we can rewrite the expression in the target form:$2(1 + \sqrt{2}) - (\sqrt{3} + \sqrt{6})$\\By comparing this with the target form, we can identify the values of $A$, $B$, $C$, and $D$:\\$A = 2$, $B = 2$, $C = 3$, and $D = 6$\\We have obtained the values of $A$, $B$, $C$, and $D$ and rewritten the expression in the target form. Now, we need to verify the accuracy of our approach and conclusions.\\We will review the steps taken to ensure that the simplification and conversion to the target form are correct.\\1. We rationalized the denominator by multiplying by the conjugate, which is a standard and valid technique.\\2. We simplified the numerator and denominator correctly.\\3. We rearranged the terms and factored out a common factor, which is a valid algebraic manipulation.\\4. We compared the expression with the target form and identified the values of $A$, $B$, $C$, and $D$ correctly.\\Our approach seems to be valid, and we have not found any potential errors or inconsistencies in our reasoning.\\However, we can improve our approach by double-checking our calculations and ensuring that we have considered all possible simplifications and conversions.\\We can also evaluate our process by considering what worked well (e.g., using the conjugate to rationalize the denominator) and what could be improved (e.g., double-checking calculations). We were given the task to simplify the expression $\frac{1+\sqrt{2}}{2+\sqrt{3}}$ and convert it to the form $A(1+\sqrt{B})-(\sqrt{C}+\sqrt{D})$, where $A$, $B$, $C$, and $D$ are positive integers. We needed to find the values of $A$, $B$, $C$, and $D$ and calculate their sum.\\Our thought process involved:\\1. Assessing the problem and identifying the key elements, constraints, and objectives.\\2. Simplifying the expression by rationalizing the denominator and multiplying by the conjugate.\\3. Advancing our reasoning by performing the multiplication and simplification steps.\\4. Synthesizing the information by combining the simplified numerator and denominator and converting the expression to the target form.\\5. Verifying the accuracy of our approach and conclusions by reviewing the steps and checking for potential errors or inconsistencies.\\Our final conclusion is that the simplified expression is $2(1 + \sqrt{2}) - (\sqrt{3} + \sqrt{6})$, which corresponds to the values $A = 2$, $B = 2$, $C = 3$, and $D = 6$.\\Therefore, the sum of $A$, $B$, $C$, and $D$ is $2 + 2 + 3 + 6 = 13$.\\The final answer is: 13  \\
\bottomrule
\end{longtable}



\end{document}